\documentclass[a4paper, fleqn]{cas-dc}

\usepackage[utf8]{inputenc}
\usepackage[T1]{fontenc}
\usepackage[numbers, sort]{natbib}
\usepackage{amsmath}
\usepackage{amssymb}
\usepackage{amsfonts}
\usepackage{amsthm}
\usepackage{mathtools}
\usepackage{graphicx}
\usepackage{realboxes}
\usepackage{enumitem}
\usepackage{textcomp}
\usepackage{color}
\usepackage{soul}
\usepackage{bbm}
\usepackage{bm}
\usepackage{multirow}
\usepackage{booktabs}
\usepackage{subcaption}
\usepackage[dvipsnames, svgnames, x11names, table]{xcolor}
\usepackage{hyperref}
\usepackage{algorithm}
\usepackage[noend]{algpseudocode}
\usepackage{array}

\theoremstyle{definition}
\newtheorem{definition}{Definition}
\newtheorem*{definition*}{Definition}

\newtheorem{problem}{Problem}
\newtheorem{remark}{Remark}
\newtheorem{proposition}{Proposition}

\newtheorem{example}{Example}

\DeclareMathOperator*{\argmin}{arg\,min}
\DeclareMathOperator*{\interior}{int}
\renewcommand{\emptyset}{\text{\O}}             
\renewcommand{\epsilon}{\varepsilon}

\makeatletter
\let\save@mathaccent\mathaccent
\newcommand*\if@single[3]{%
     \setbox0\hbox{${\mathaccent"0362{#1}}^H$}%
     \setbox2\hbox{${\mathaccent"0362{\kern0pt#1}}^H$}%
     \ifdim\ht0=\ht2 #3\else #2\fi
}
\newcommand*\rel@kern[1]{\kern#1\dimexpr\macc@kerna}
\newcommand*\widebar[1]{\@ifnextchar^{{\wide@bar{#1}{0}}}{\wide@bar{#1}{1}}}
\newcommand*\wide@bar[2]{\if@single{#1}{\wide@bar@{#1}{#2}{1}}{\wide@bar@{#1}{#2}{2}}}
\newcommand*\wide@bar@[3]{%
     \begingroup
     \def\mathaccent##1##2{%
          \let\mathaccent\save@mathaccent
          \if#32 \let\macc@nucleus\first@char \fi
          \setbox\z@\hbox{$\macc@style{\macc@nucleus}_{}$}%
          \setbox\tw@\hbox{$\macc@style{\macc@nucleus}{}_{}$}%
          \dimen@\wd\tw@
          \advance\dimen@-\wd\z@
          \divide\dimen@ 3
          \@tempdima\wd\tw@
          \advance\@tempdima-\scriptspace
          \divide\@tempdima 10
          \advance\dimen@-\@tempdima
          \ifdim\dimen@>\z@ \dimen@0pt\fi
          \rel@kern{0.6}\kern-\dimen@
          \if#31
               \overline{\rel@kern{-0.6}\kern\dimen@\macc@nucleus\rel@kern{0.4}\kern\dimen@}%
               \advance\dimen@0.4\dimexpr\macc@kerna
               \let\final@kern#2%
               \ifdim\dimen@<\z@ \let\final@kern1\fi
               \if\final@kern1 \kern-\dimen@\fi
          \else
               \overline{\rel@kern{-0.6}\kern\dimen@#1}%
          \fi
     }%
     \macc@depth\@ne
     \let\math@bgroup\@empty \let\math@egroup\macc@set@skewchar
     \mathsurround\z@ \frozen@everymath{\mathgroup\macc@group\relax}%
     \macc@set@skewchar\relax
     \let\mathaccentV\macc@nested@a
     \if#31
     \macc@nested@a\relax111{#1}%
     \else
          \def\gobble@till@marker##1\endmarker{}%
          \futurelet\first@char\gobble@till@marker#1\endmarker
          \ifcat\noexpand\first@char A\else
               \def\first@char{}%
          \fi
          \macc@nested@a\relax111{\first@char}%
     \fi
     \endgroup
}
\makeatother
	
\begin{document}
\let\WriteBookmarks\relax
\def\floatpagepagefraction{1}
\def\textpagefraction{.001}
\shorttitle{Entanglement-Free Trajectory Planning for Tethered Robots}
\shortauthors{G. Battocletti et~al.}

\title [mode = title]{Entanglement-Free Trajectory Planning for Tethered Mobile Robots with a Slack Tether}

\author[]{Gianpietro Battocletti}[orcid=0009-0004-6981-0017]
\cormark[1]
\ead{g.battocletti@tudelft.nl}
\credit{Conceptualization of this work, Methodology, Software implementation, Writing -- Original draft preparation}
\affiliation[]{
    organization={Delft Center for Systems and Control, Delft University of Technology},
    city={Delft},
    country={The Netherlands}
}

\author[]{Dimitris Boskos}[orcid=0000-0003-0287-9197]
\ead{d.boskos@tudelft.nl}
\credit{Supervision, Methodology, Review}

\author[]{Bart {De Schutter}}[orcid=0000-0001-9867-6196]
\ead{b.deschutter@tudelft.nl}
\credit{Supervision, Methodology, Review}

\cortext[cor1]{Corresponding author}

\begin{abstract}
    In motion planning algorithms for tethered mobile robots, the entanglement state of the tether is a critical aspect to consider during the planning phase. 
    This is particularly important in case of a slack tether, where the shape of the tether is not determined solely by the geometry of the environment and the location of the obstacles, but also by the dynamics of the tether, by the trajectory followed by the robot, and possibly by exogenous forces.
    In this scenario, preventing entanglement requires planning a robot trajectory that accounts for the entanglement definition and for the dynamics of the robot and of the tether.
    In this work, we propose a motion planning algorithm for tethered mobile robots with a slack tether that computes dynamically feasible entanglement-free trajectories to navigate through an environment with static obstacles. 
    By considering the entanglement state during all the stages of the planning pipeline, we are able to compute safer trajectories that avoid entanglement during the motion of the robot.
    We achieve this through a three-step pipeline, which includes (i) the construction of a topological model of the entanglement-free configuration space of the tethered robot, (ii) the generation of a set of candidate paths using this model, and (iii) the computation of a dynamically feasible entanglement-free trajectory by solving a homotopy-constrained trajectory generation problem.
    The resulting trajectory can then be executed to lead the robot to its target location, while maintaining the tether in an entanglement-free configuration.
    We demonstrate the benefits of this algorithm in simulations, where we show how the planning algorithm avoids violations of the entanglement constraints, resulting in safer and more reliable trajectories.
\end{abstract}

\begin{keywords}
	Tethered mobile robots \sep Entanglement avoidance \sep Path and motion planning \sep Trajectory optimization \sep Simplicial complex \sep Homotopy constraints
\end{keywords}

\maketitle

\section{Introduction}
\label{sec:introduction}
Tethered mobile robots are a class of mobile robots characterized by a cabled connection between the robot and another point in the environment, which can be either a ground station or another robot serving as a mobile hub \cite{marques2023tethered}. 
The tether can be used to provide power and to enable reliable communication between the robot and its connection point, making tethered mobile robots suitable for tasks with a long duration, such as search and rescue tasks \cite{pratt2008tethered}, inspection of civil infrastructures \cite{almhdawi2021cart, omandam20223d}, and litter collection from the seabed \cite{ilioudi2026seaclear}, as well as in communication-deprived environments, as in the case of underwater applications \cite{amer2025react,mccammon2017planning, buchholz2025framework}.
One critical risk in tethered mobile robots is that of tether entanglement, which can happen for example when the tether winds around obstacles, or forms knots with itself.
Entanglement of the tether can lead to suboptimal trajectories, requiring specific motions to be performed in order to disentangle the tether, or it can even result in the failure of the task and require the intervention of an external operator.
The definition of entanglement is in general application- and domain-dependent, with a comprehensive overview of the existing definitions being provided in \cite{battocletti2024entanglement}.

Entanglement prevention is particularly critical in the case of tethered mobile robots with a slack tether, i.e., where there is not a dedicated tethered management system that regulates the length of the tether over time.
When a tether management system is available, the tether is usually kept in a taut configuration, where `taut' indicates that the tether configuration is always a locally shortest path \cite{battocletti2024entanglement}.
In this scenario the tether configuration can be determined solely from the geometry of the workspace and from the path followed by the robot \cite{mcgarey2017tslam, teshnizi2021motion}. 
However, keeping the tether taut has some drawbacks: first, it requires the availability of a tether management system, along with a dedicated controller \cite{petit2022tape}; second, due to the internal tension that keeps it taut, the tether exerts a force on the robot that must be compensated \cite{tognon2017dynamics}; lastly, a taut tether bends around obstacles, which can damage it or lead it to get stuck \cite{rajan2016tether}.

Despite these disadvantages, in the majority of the literature a taut tether model is considered. 
The reason for this is that the assumption of a taut tether significantly simplifies the motion planning problem, as the evaluation of the entanglement state of the tether configuration becomes a purely geometric problem \cite{hert1999motion}.
In fact, when considering a taut tether, the entanglement definitions have usually a geometric nature, e.g., they require the tether to not form bends, or to avoid forming loops around obstacles and crossing with itself \cite{sinden1990tethered}.
Even in works that consider slack tether models, the entanglement state of the tether is evaluated on the tautened version of the tether configuration, i.e., on the shortest path in the homotopy class of the tether \cite{amer2025react, cao2023neptune, cao2025braid}, and entanglement state of the tether is usually characterized through geometric criteria.
Furthermore, existing approaches for motion planning for tethered robots with a slack tether are tailored to specific tether dynamics and entanglement definitions, lacking generality and flexibility, and cannot be used with different tether dynamic models and entanglement definitions.
For example, in \cite{shimada2024tangle} the dynamics of the tether is ignored entirely, while in \cite{shapovalov2020tangle, petit2022tape} only the geometric shape traced by a tether suspended between two points is considered.

When dealing with a general slack tether and a general entanglement definition, these simplifications are not possible, as the tether shape, and thus its entanglement state, depend on the trajectory followed by the robot, and on the tether dynamics. 
However, directly optimizing the trajectory based on the dynamics of the tether and of the robot is, in general, a computationally expensive problem, as due to the presence of obstacles, and to the high dimensionality of the tether configuration, the problem is highly nonlinear and nonconvex. 
Moreover, the presence of obstacles giving rise to multiple homotopy classes yields a solution space that is not connected, making the optimization problem even harder by trapping the solver in local minima, as solvers struggle to switch between solutions in different homotopy classes \cite{kim2012trajectory, he2022homotopy}.
This issue is amplified by the need for the planning algorithm to actively search through different homotopy classes, as some of them may not admit the existence of any entanglement-free trajectory. 
Still, multiple homotopy classes may yield feasible entanglement-free trajectories, among which the planner must choose when searching for a globally optimal trajectory.

In this work, we propose a novel approach to generate dynamically feasible entanglement-free trajectories for mobile tethered robots with a slack tether.
To do so, we take inspiration from \cite{zhang2026homotopy,degroot2024topology, rosmann2017integrated} and break the trajectory generation into two steps: we first compute a candidate path, which uniquely identifies a homotopy class, and then perform the trajectory optimization in that homotopy class, so that the solution space is now connected.
The process can be repeated multiple times using paths lying in different homotopy classes as initial solutions, allowing to exhaustively cover the solution space, and resulting in the computation of multiple locally optimal trajectories that can then be compared to select the best one \cite{sahin2024topo}.
This approach is particularly suited for motion planning for tethered mobile robots since, differently from the case of motion planning for untethered robots, homotopy-class constraints naturally emerge due to the presence of the tether. 
Even more, the search and selection of an appropriate homotopy class for the motion of the robot is usually a part of the path planning algorithm for tethered mobile robots. This means that there is only a limited additional overhead required for the parallel optimization of multiple trajectories in different homotopy classes.
Lastly, the use of entanglement definitions conveniently allows pruning the set of homotopy classes to consider, alleviating the computational burden of the trajectory optimization problem.

As a result of this approach, the proposed planning pipeline is composed by three main parts, depicted in Figure \ref{fig:overview} and described next:
\begin{enumerate}
    \item We start by generating a topological model of the configuration space of the tethered robot. During the construction of this model we verify both the tether length constraint and the existence of an entanglement-free tether configuration in each point of the model. In this way, it is possible to exclude homotopy classes where no feasible solution to the trajectory generation problem exists, reducing the search space;
    \item We then run a path planning algorithm on the topological model to obtain one or more entanglement-free paths. 
    While the paths are entanglement-free, they do not guarantee the existence of a corresponding dynamically feasible entanglement-free trajectory, and rather serve as initial solution candidates in different homotopy classes for the trajectory generation step.
    In fact, as mentioned, multiple solutions to the path planning problem may exist in distinct homotopy classes. The path planning algorithm can enumerate the existing solutions, selecting the most promising ones.
    Additionally, in case the robot finds itself in an entangled configuration, the proposed topological model can also be used to perform the disentanglement path planning operation, enabling the possibility for the robot to return to a safe state with an entanglement-free tether.
    \item Last, we compute a dynamically feasible solution by solving a homotopy-constrained trajectory generation problem. In this stage, we compute and evaluate multiple trajectories, corresponding to different homotopy classes \cite{sahin2024topo}. This approach has two relevant benefits, namely, (i) a larger part of the solution space is covered, since multiple homotopy classes are considered through the multiple candidate paths, increasing the likelihood to generate a trajectory in the homotopy class in which the globally optimal solution lies, and (ii) multiple alternatives are evaluated in case no dynamically feasible trajectory exists in some of the candidate homotopy classes. 
\end{enumerate}
The proposed pipeline is flexible and can be used with different types of robots and in different application domains, such as aerial, underwater, and space mobile robots. 
In this sense, the proposed approach allows including the appropriate dynamics of the robot and that of the tether depending on the target robotic platform and application domain.
Moreover, the components of the pipeline can be used as standalone modules, as well as in combination with other planning algorithms, providing a versatile planning framework.
\begin{figure*}
    \centering
    \includegraphics[width=\textwidth]{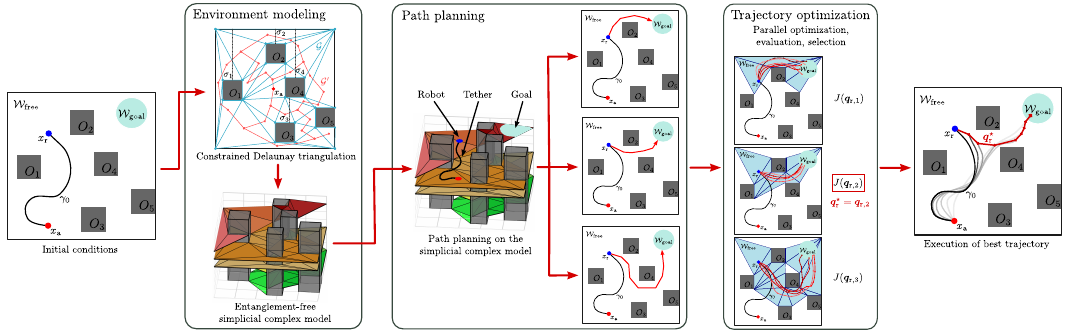}
    \caption{Overview of the motion planning pipeline developed in this work. First, a simplicial complex model of the entanglement-free configuration space of the tethered robot is computed. This model is then used to determine a set of candidate paths along which the robot can move while remaining in the entanglement-free configuration space. Last, the paths are converted into dynamically feasible trajectories by solving an optimization-based homotopy-constrained trajectory generation problem. The best trajectory is then executed by the robot.}
    \label{fig:overview}
\end{figure*}

The contributions of this work with respect to the state of the art are summarized as follows:
\begin{itemize}
    \item We introduce a novel algorithm to generate a topological model of the  entanglement-free configuration space of a tethered robot.
    To do so, we build on the simplicial complex model of \cite{battocletti2025efficient} by introducing entanglement avoidance constraints during the construction of the model.
    This model allows for efficient motion planning using a broad class of path planning methods, while removing the need to explicitly track the homotopy class of the path during the path planning operation.
    \item We propose a path planning pipeline that leverages the entan\-gle\-ment-free topological model to compute entan\-gle\-ment-free paths for tethered robots with a slack tether.
    In addition to finding the shortest entan\-gle\-ment-free path to reach the goal, the proposed path planning pipeline allows enumerating homotopically distinct entan\-gle\-ment-free paths.
    Furthermore, differently from existing methods, the proposed path planning framework is also suitable for disentanglement path planning, i.e., to find a path leading a robot from an initially entangled tether configuration to a non-entangled one.
    \item We propose a novel homotopy-constrained trajectory planning method for tethered robots that allows to compute dynamically feasible entan\-gle\-ment-free trajectories. This is achieved by solving an optimization-based trajectory generation problem that includes the dynamics of the robot and those of the tether.
\end{itemize}
These contributions are validated by means of numerical simulations, in which we show the benefits of the proposed planning pipeline with respect to existing approaches.

The remainder of this article is organized as follows. In Section \ref{sec:background} we introduce the background material and the problem setting. In Section \ref{sec:modeling} we discuss the construction of the topological model of the entanglement-free configuration space, which is used in Section \ref{sec:path_planning} to compute a set of candidate paths between the robot and the goal location. 
Finally, in Section \ref{sec:trajectories} we present the proposed approach for the optimization-based generation of dynamically feasible entanglement-free trajectories. We validate the proposed method with numerical simulations in Section \ref{sec:case_study}, and we conclude the article in Section \ref{sec:conclusions}.

\section{Background and problem setting}
\label{sec:background}

\subsection{Workspace topology}
Let $\mathcal{W} \subset \mathbb{R}^2$ be a bounded 2D polygonal workspace, and let $\{O_i\}_{i=1}^n$ be a set of $n$ polygonal obstacles such that $O_i\cap O_j=\emptyset, \forall i, j \in \{1, \ldots n\}$. We then define the obstacle region as $\mathcal{O}=\cup_{i=1}^n O_i$ and the free workspace as $\mathcal{W}_\mathrm{free} = \mathrm{cl}(\mathcal{W} \setminus \mathcal{O})$ \cite{latombe1991robot}.
Without loss of generality, we assume that $\mathcal{W}_\mathrm{free}$ is connected.
A path in $\mathcal{W}_\mathrm{free}$ is a continuous mapping $\alpha:[0,1]\rightarrow\mathcal{W}_\mathrm{free}$. We indicate the set of all paths in $\mathcal{W}_\mathrm{free}$ as $\Gamma$, and the set of all paths between two points $a, b\in\mathcal{W}_\mathrm{free}$ as $\Gamma_{a,b}$.
We indicate the length of a path $\alpha$ as $\mathrm{len}(\alpha)$. 
Under the assumptions made in the definition of $\mathcal{W}_\mathrm{free}$, between any two points $a, b$ there exists a shortest path, i.e., one with minimal length, which we indicate as $\widebar{\alpha}_{a,b}$ \cite{burago2001course}.
Two paths that share one endpoint can also be concatenated; given two paths $\gamma_1$, $\gamma_2$ such that $\gamma_1(1)=\gamma_2(0)$, we indicate their concatenation as $\gamma_3 = \gamma_1 \diamond \gamma_2$.

\subsection{Homotopy and homotopy signatures}
Among the $n$ obstacles composing $\mathcal{O}$, the $m \leq n$ obstacles that fully puncture\footnote{I.e., those that have empty intersection with the boundary of $\mathcal{W}$.} $\mathcal{W}$ give rise to multiple homotopy equivalence classes where paths can lie. Given two points $a$, $b$, two paths $\alpha_1, \alpha_2 \in \Gamma_{a, b}$ belong to the same homotopy class if there exists a continuous transformation $H:[0,1]\times[0,1]\rightarrow\mathcal{W}_\mathrm{free}$ between them such that $H(s, 0) = \alpha_1(s)$, $H(s, 1) = \alpha_2(t)$, $H(0, t) = a$, and $H(1, t) = b$ \cite{lee2010introduction}. We indicate homotopy between two paths $\alpha_1$, $\alpha_2$ as $\alpha_1 \sim \alpha_2$, and the homotopy class of a path $\alpha$ as $[\alpha]$ \cite{hatcher2005algebraic}.
Given two points $a, b$, in addition to a globally shortest path $\widebar{\alpha}_{a,b}$, in each homotopy class there exists a locally shortest path. Given a path $\gamma$, we indicate the shortest path in $[\gamma]$ (which is unique for polygonal domains \cite{hershberger1994computing}) by
\begin{equation}
     \begin{aligned}
          \mathcal{S}(\gamma) = &\argmin_{\gamma'\in\Gamma} (\mathrm{len}(\gamma')) \\
                                &\text{ s. t. } \gamma' \sim \gamma.
     \end{aligned}
\end{equation}

A commonly used approach to identify the homotopy class in which a path lies, is that of computing its homotopy signature $h$, which is a function that maps each path to its homotopy class. A signature is defined by a set of $m$ generators $\{\sigma_i\}_{i=1}^m$, one for each of the $m$ obstacles that give rise to multiple homotopy classes in $\mathcal{W}$. Each generator $\sigma_i$ is defined as an infinite ray starting from a point inside an obstacle. Moreover, generators do not intersect with each other \cite{bhattacharya2018path}.
Under this construction, signatures are homotopy invariants, i.e., $h(\alpha_1)=h(\alpha_2) \Leftrightarrow \alpha_1 \sim \alpha_2$ \cite[Proposition 1]{bhattacharya2018path}, where $h(\alpha_i)$ denotes the signature of the path $\alpha_i$. We will henceforth often identify a homotopy class with its signature.   

\subsection{Simplicial complexes and triangulations}
\label{sec:simplicial_complexes_and_triangulations}

A widely used tool to model topological spaces is that of simplicial complexes \cite{edelsbrunner2010computational}.
Simplicial complexes are defined as collections of simplices. A $k$-simplex is the convex hull of a set of $k+1$ affinely independent points $\{x_0, ..., x_k\}\subset\mathbb{R}^d$, $d\in\mathbb{N}$, which we represent by the set of its vertices $\kappa = \{x_i\}_{i=0}^k$.
0-simplices are called points, 1-simplices line segments, and 2-simplices triangles. Given a simplex $\kappa$, a lower-dimensional simplex called a face can be obtained by considering the convex hull of a non-empty subset of vertices $\kappa' \subset \kappa$. We denote this relationship by $\kappa' <  \kappa$. $0$-dimensional faces are called vertices, while $1$-dimensional faces are called edges.
A simplicial complex $S$ is a finite collection of simplices such that (i) $\kappa \in S$ and $\kappa'<\kappa$ imply that also $\kappa'\in S$, and (ii) given $\kappa_1, \kappa_2 \in S$ then $\kappa_1 \cap \kappa_2$ is either empty or a face of both $\kappa_1$ and $\kappa_2$ \cite{edelsbrunner2010computational}.
Given a simplicial complex $S$, we indicate with $S_k$ the subset of all the simplices in $S$ with dimension $k$. The set $S_0$ is called the vertex set, and $S_1$ the edge set of $S$.
The union of all the simplices in $S$, which we indicate by $\mathcal{U}(S)$, is called the underlying space of $S$ \cite{edelsbrunner2010computational}.

In this work we consider manifolds in $\mathbb{R}^2$, which are a class of topological spaces that can be represented as simplicial complexes by mean of triangulations. A triangulation of a manifold $X$ is a simplicial complex $S$ such that $\mathcal{U}(S)$ is homeomorphic to $X$.
We consider here a special class of triangulations, that of constrained triangulations, where the set of 0-simplices and a subset of the 1-simplices of $S$ are predefined. 
This means that the elements of $S_0$ and a subset of $S_1$ are provided as an input to the algorithm computing the triangulation, which is then tasked with computing the remaining part of $S_1$ and the set $S_2$ in order to obtain a finite triangulation of $X$ \cite{preparata1985computational}.

\subsection{Dynamic model of a low-tension tether}
\label{sec:fem-model}
The dynamics of low-tension tethers are commonly modeled numerically, with lumped-mass and finite-element models being the prevailing approaches \cite{buckham2004development,buckham1999dynamics, eidsvik2018finite, bulic2022beam}. 
Lumped-mass models are based on approximately solving the equations of motion of a tether by discretizing them both spatially and temporally. The spatial discretization divides the tether in elements, which are connected to each other at nodes. 
The behavior of each element is described analytically through physics-based equations that define the effect of forces, elasticity, damping, etc., on the element. 
The elements are then assembled into a global system of equations, where continuity (and possibly smoothness) are imposed through nodal constraints. This results in a system of equations of the type
\begin{equation}
     \label{eq:fem-background}
     M\ddot{q}_\mathrm{t}(t) + C\dot{q}_\mathrm{t}(t) + Kq_\mathrm{t}(t) - F(t) = 0,
\end{equation}
where $M$ represents the inertia matrix, $C$ the damping matrix, and $K$ the elasticity matrix, while $F$ is the vector of the forces acting on the nodes, including both internal and external forces. Furthermore,
\begin{equation}
    \label{eq:tether-state}
    q_\mathrm{t}(t)\triangleq[q_\mathrm{t}^{(0)}(t), q_\mathrm{t}^{(1)}(t), \ldots, q_\mathrm{t}^{(z-1)}(t)]^\top
\end{equation}
represents the displacement of the $z$ nodes, while $\dot{q}_\mathrm{t}(t)$ and $\ddot{q}_\mathrm{t}(t)$ represent the velocity and acceleration of the nodes, respectively.
Once assembled, \eqref{eq:fem-background} can be integrated numerically to find the nodal accelerations, from which the nodal velocities and displacements can then be computed.
In the case of an environment with obstacles, it is necessary to detect and resolve collisions between the tether and the obstacles, as well as to prevent self-intersections of the tether \cite{roy2009continuous}. 
When including these effects, the tether dynamics become hybrid, as contacts with obstacles introduce new forces acting on the tether.
For an in-depth discussion of how collisions with obstacles and self-intersections of the tether can be managed, we refer the reader to \cite{roy2009continuous}.

\subsection{Problem setting}
Let $\mathcal{W}, \mathcal{O}, \mathcal{W}_\text{free}$ be respectively the workspace, the obstacle region, and the obstacle-free workspace where a tethered robot operates. 
Moreover, $\mathcal{W}_\text{goal} \subset \mathcal{W}_\text{free}$ indicates the goal region.
The tether is represented by a finite-length obstacle-free path $\gamma:[0,1]\rightarrow\mathcal{W}_\text{free}$ called \emph{tether configuration}. 
The tether initial point $\gamma(0)$ is located at a fixed anchor point $x_\mathrm{a}$, while the terminal point $x_\mathrm{r} = \gamma(1)$ represents the robot location.
The initial tether configuration is indicated as $\gamma_0$, with the initial robot location being then given by $\gamma_0(1)$.
In order to keep track of the homotopy class of the tether configuration and of the paths that we consider during motion planning, we consider a set of generators $\{\sigma_i\}_{i=1}^m$ that satisfy the conditions of \cite[Proposition 1]{bhattacharya2018path}, through which we define a homotopy signature $h$.
In particular, each obstacle that gives rise to multiple homotopy classes is associated to a unique generator, and the generators do not intersect with each other.
In this work, we consider a fixed-length tether, i.e., one whose length is always equal to a constant value $l$.
Additionally, we consider an \emph{entanglement definition}, which provides a criterion to determine if a given tether configuration is entangled or not.
An entanglement definition is an indicator function $\delta:\Gamma \rightarrow \{0, 1\}$ with $\delta(\gamma)=1$ indicating that $\gamma$ is entangled. 
A broad set of entanglement definitions of this type is discussed in \cite{battocletti2024entanglement}.
For convenience, we recall two of those definitions here, and use the same numbering as in \cite{battocletti2024entanglement}.
In particular, we consider the `obstacle free convex hull' entanglement definition \cite[Definition 6]{battocletti2024entanglement}, which evaluates entanglement based on the geometry of the tether, and the `local visibility homotopy' entanglement definition \cite[Definition 9]{battocletti2024entanglement}, which characterizes entanglement based on the homotopy class in which the tether lies.
\setcounter{definition}{5}
\begin{definition}[Obstacle-free Convex Hull \cite{battocletti2024entanglement}]
	\label{def:obstacle_free_conv_hull}
	A tether configuration $\gamma$ is not entangled if its convex hull does not intersect with any obstacle, i.e., $\mathrm{convhull}(\gamma)\cap\interior(\mathcal{O})=\emptyset$, where $\interior(\cdot)$ indicates the interior of a set.
\end{definition}
\setcounter{definition}{8}
\begin{definition}[Local Visibility Homotopy \cite{battocletti2024entanglement}]
	\label{def:local_visibility_homotopy}
    Let $l_{x_1, x_2}$ indicate the straight line segment between two points $x_1$ and $x_2$.
	A tether configuration $\gamma$ is not entangled if, for any pair of points $x_1 = \gamma(s_1)$, $x_2 = \gamma(s_2)$ such that $l_{x_1, x_2}\cap\interior(\mathcal{O})=\emptyset$, it holds that $\gamma_{[s_1, s_2]} \sim l_{x_1, x_2}$.
\end{definition}

The goal is for a robot initially located at $\gamma_0(1)$ to reach the goal region $\mathcal{W}_\text{goal}$ while maintaining the tether in a non-entangled state. We formalize this task as the following motion planning problem.
\begin{problem}(Entanglement-free motion planning.)
    \label{prob:entanglement-free-planning}
	Given a motion planning problem for tethered robots defined by a tuple $(\mathcal{W}_\text{free}, \mathcal{W}_\text{goal}, \gamma_0)$, and an entanglement definition $\delta$, find a trajectory $\beta$, if one exists, such that the robot can reach $\mathcal{W}_\text{goal}$ by following $\beta$ while respecting the tether length constraint and retaining a non-entangled tether configuration.
\end{problem}
An example of a path planning problem for a tethered robot is depicted in Figure \ref{fig:problem-setting-a}.
In order for Problem \ref{prob:entanglement-free-planning} to have a solution, a non-entangled initial tether configuration with $\delta(\gamma_0) = 0$ is a necessary condition. 
Resolving an initially entangled tether configuration yields the  second motion planning problem that we seek to solve, which is depicted in Figure \ref{fig:problem-setting-b}.
\begin{problem}(Disentanglement motion planning.)
     \label{prob:disentanglement-pp}
	Given the data $(\mathcal{W}_\text{free}, \gamma_0)$, and an entanglement definition $\delta$ such that $\delta(\gamma_0) = 1$, find a trajectory $\beta$, if one exists, such that by following $\beta$ the robot leads the tether to a non-entangled configuration.
\end{problem}
\begin{figure}
	\centering
    \begin{subfigure}[b]{0.48\linewidth}
        \centering
        \includegraphics[width=\linewidth]{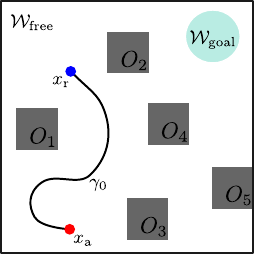}
        \caption{}
        \label{fig:problem-setting-a}
    \end{subfigure}
    \hfill
    \begin{subfigure}[b]{0.48\linewidth}
        \centering
        \includegraphics[width=\linewidth]{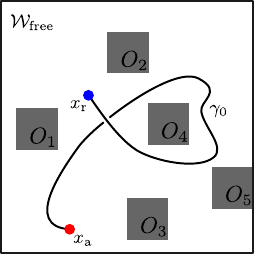}
        \caption{}
        \label{fig:problem-setting-b}
    \end{subfigure}
	\caption{(a) Example of a path planning problem for a tethered mobile robot. (b) Example of disentanglement path planning problem.}
	\label{fig:problem-setting}
\end{figure}

\section{Modeling of the configuration space}
\label{sec:modeling}
In this section we describe the construction of a topological model of the entanglement-free configuration space of the tethered robot.
We start by formally defining the entanglement-free configuration space in Section \ref{sec:config-space}, for which we then build a computationally tractable model.
It is well known that the configuration space of a tethered robot can be modeled as a simply connected subset of the universal covering space of $\mathcal{W}_\mathrm{free}$ \cite{bhattacharya2018path, battocletti2025efficient, teshnizi2014computing}. 
To this end, a widely used method is that of constructing a homotopy-augmented graph \cite{igarashi2010homotopic, kim2014path}.
However, homotopy-augmented graphs are computationally expensive to generate \cite{bhattacharya2021search}, and only offer a discrete approximation of the configuration space of the tethered robot. Moreover, considering an entanglement constraint requires checking it at each node of the graph, further increasing the computational cost.

We propose here to address these issues by following a different approach to model the configuration space of a tethered robot.
In particular, we directly construct a topological model of the configuration space, rather than building a graph embedded in it.
To do so, we consider the simplicial complex model first proposed in \cite{battocletti2025efficient}. 
The resulting topological model has the advantages of being continuous, eliminating the discretization error intrinsic in graph-based methods, and is significantly cheaper to compute than homotopy-augmented graphs.
Moreover, as we will show in Section \ref{sec:verification-of-entanglement}, the entanglement constraint can be verified efficiently and only in a low number of points in the workspace.
This result will be used in Section \ref{sec:simplicial-complex-model} to efficiently build a simplicial complex model of the entanglement-free configuration space of a tethered robot.

\subsection{Entanglement-free configuration space of a tethered robot}
\label{sec:config-space}
We begin by formally defining the configuration of a tethered robot \cite{yang2022efficient_b}.
\setcounter{definition}{0}
\begin{definition}[Configuration of a tethered robot]
	\label{def:configuration}
	Given a free workspace $\mathcal{W}_\mathrm{free}$ and an anchor point $x_\mathrm{a} \in \mathcal{W}_\mathrm{free}$, a configuration of a tethered robot is a path $\gamma: [0,1]\rightarrow\mathcal{W}_\mathrm{free}$ with $\gamma(0) = x_\mathrm{a}$.
\end{definition}
The configuration space of a tethered mobile robot under Definition \ref{def:configuration} is then
\begin{equation}
	\mathcal{C}(x_\mathrm{a}) = \{\gamma:[0,1]\rightarrow \mathcal{W}_\mathrm{free} :\gamma(0) = x_\mathrm{a}, \gamma \; \mathrm{continuous}\}.
\end{equation}
The configuration space $\mathcal{C}(x_\mathrm{a})$ is infinite dimensional, as it contains all the possible tether configurations that satisfy the tether length constraint, and is therefore difficult to model.
A common approach to overcome this issue for the purpose of motion planning is to introduce a \emph{reduced} configuration, which does not include the full information about the tether shape, but only that about the homotopy class in which the tether lies \cite{kim2014path,battocletti2025efficient}.
\begin{definition}[Reduced configuration of a tethered robot]
	\label{def:configuration-reduced}
	Given a configuration of a tethered robot $\gamma \in \mathcal{C}(x_\mathrm{a})$ and a signature $h$ associated to the free space $\mathcal{W}_\mathrm{free}$, a reduced configuration of a tethered robot is defined as $r(\gamma) = (\gamma(1), h(\gamma))$.  
\end{definition}
Reduced configurations introduced in Definition \ref{def:configuration-reduced} live in a `smaller' configuration space given by 
\begin{equation}
    \mathcal{R}(x_\mathrm{a}, \{\sigma_i\}_{i=1}^m) = \mathcal{W}_\mathrm{free} \times \mathbb{F}_m,
\end{equation}
where $\mathbb{F}_m$ represents the free group built from the $m$ generators.
In practice, in $\mathcal{R}(x_\mathrm{a}, \{\sigma_i\}_{i=1}^m)$ we only keep track of the homotopy class through which a point is reached, rather than of the exact tether configuration. This allows to enumerate all the homotopy classes through which a point can be reached, corresponding to the elements of $\mathbb{F}_m$.
Furthermore, points in $\mathcal{R}(x_\mathrm{a}, \{\sigma_i\}_{i=1}^m)$ have a one-to-one correspondence with points in the so-called universal covering space\footnote{
    The universal covering space $\widetilde{\mathcal{W}}_\mathrm{free}$ of ${\mathcal{W}}_\mathrm{free}$ is a simply connected topological space such that every loop in ${\mathcal{W}}_\mathrm{free}$ lifts to a path in $\widetilde{\mathcal{W}}_\mathrm{free}$, with the endpoints of the lifted path determined by the homotopy class of the loop (see \cite[Chapter 11]{lee2010introduction}).
    This construction provides a natural one-to-one correspondence between reduced tether configurations and points in $\widetilde{\mathcal{W}}_\mathrm{free}$.
    We refer to \cite{battocletti2025efficient} for this precise representation.
} $\widetilde{\mathcal{W}}_\mathrm{free}$ of $\mathcal{W}_\mathrm{free}$, a property that will be leveraged in Section \ref{sec:simplicial-complex-model} to build the topological model of the entanglement-free configuration space.
The number of homotopy classes through which a point can be reached is, in general, infinite; however, given the maximum tether length $l$, only a finite number of homotopy classes admit a path that is shorter than or equal to $l$. 
This leads us to define the length-constrained reduced configuration space $\widebar{\mathcal{R}}(x_\mathrm{a}, \{\sigma_i\}_{i=1}^m, l)$ as
\begin{equation}
     \begin{aligned}
          \widebar{\mathcal{R}}(x_\mathrm{a}&, \{\sigma_i\}_{i=1}^m, l) = \{(x, s) \in\mathcal{W}_\mathrm{free}\times\mathbb{F}_m : \\&\exists \gamma \in \Gamma_{x_\mathrm{a}, x} \text{ such that } [\gamma] \cong s \text{ and } \mathrm{len}(\gamma) \leq l \}.
     \end{aligned}
\end{equation}
The configuration space $\widebar{\mathcal{R}}(x_\mathrm{a}, \{\sigma_i\}_{i=1}^m, l)$ is a subset of $\mathcal{R}(x_\mathrm{a}, \{\sigma_i\}_{i=1}^m)$ corresponding to finite-length tether configurations, and can therefore be conveniently represented by a bounded simplicial complex on $\widetilde{\mathcal{W}}_\mathrm{free}$.
 
We discuss now how the entanglement constraint can be included in the configuration space. By directly including the entanglement constraint in Definition \ref{def:configuration}, i.e., by restricting $\mathcal{C}(x_\mathrm{a})$ to only the tether configurations that are not entangled with respect to the entanglement definition $\delta$, we obtain the set
\begin{equation}
     \mathcal{N}(x_\mathrm{a}, \delta) = \{\gamma \in \mathcal{C}: \delta(\gamma) = 0\}.
\end{equation}
Since the entanglement-free configuration space is still an infinite dimensional space of curves, we follow the same reduction approach used to go from $\mathcal{C}(x_\mathrm{a})$ to $\mathcal{R}(x_\mathrm{a}, \{\sigma_i\}_{i=1}^m)$ in order to obtain the reduced entanglement-free configuration space and define:
\begin{equation}
     \begin{aligned}
          \widebar{\mathcal{N}}(x_\mathrm{a}, \{\sigma_i\}_{i=1}^m,& l, \delta) = \{(x, s) \in\mathcal{W}_\mathrm{free}\times\mathbb{F}_m : \\&\exists \gamma \in \Gamma_{x_\mathrm{a}, x} \text{ such that } [\gamma] \cong s, \\
          &\mathrm{len}(\gamma) \leq l, \\
          &\text{and } \delta(\gamma)=0\}.
     \end{aligned}
\end{equation}
For all the elements $(x, h)\in\widebar{\mathcal{N}}$ we know that there exists, in the homotopy class $h$, at least one path that (i) has length less than or equal to $l$, and (ii) is not entangled with respect to $\delta$.
For the reduced configuration spaces defined so far the following inclusions hold:
\begin{equation}
	\widebar{\mathcal{N}}(x_\mathrm{a}, \{\sigma_i\}_{i=1}^m, l, \delta) \subseteq \widebar{\mathcal{R}}(x_\mathrm{a}, \{\sigma_i\}_{i=1}^m, l) \subsetneq \mathcal{R}(x_\mathrm{a}, \{\sigma_i\}_{i=1}^m).
\end{equation}

In the next section we discuss how the entanglement constraint can be verified in a computationally-tractable way over $\widebar{\mathcal{R}}(x_\mathrm{a}, \{\sigma_i\}_{i=1}^m, l)$ in order to find $\widebar{\mathcal{N}}(x_\mathrm{a}, \{\sigma_i\}_{i=1}^m, l, \delta)$. For ease of notation, in the following we drop the arguments of $\widebar{\mathcal{R}}$ and $\widebar{\mathcal{N}}$.

\subsection{Verification of the entanglement constraint}
\label{sec:verification-of-entanglement}
We now illustrate a computationally-tractable criterion to go from $\widebar{\mathcal{R}}$ to $\widebar{\mathcal{N}}$.
The first step is, given a point $(x, s) \in \widebar{\mathcal{R}}$, to determine an efficient way to verify the existence of an entanglement-free path between the anchor point $x_\mathrm{a}$ and the point $x$ in the homotopy class $s$. 
In general, verifying the existence of an entanglement-free path may require checking infinitely many paths.
To tackle this issue, we show that, for all the binary entanglement definitions discussed in \cite{battocletti2024entanglement}, there exists such an entanglement-free path in a homotopy class if and only if the shortest path in this class is also entanglement-free. 
\begin{proposition}[Entanglement state of shortest path]
     \label{prop:entanglement-shortest-path}
     Let $\delta$ be a binary entanglement definition among those listed in \cite{battocletti2024entanglement}.
     Given a tether configuration $\gamma$ between two points $x_1$, $x_2$, if $\mathcal{S}(\gamma)$ is entangled with respect to an entanglement definition $\delta$, then also $\gamma$ is entangled with respect to $\delta$.
\end{proposition}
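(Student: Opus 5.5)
The proof will be a case analysis over the binary entanglement definitions of \cite{battocletti2024entanglement}, which fall into two families. Geometric definitions (e.g. Definition~\ref{def:obstacle_free_conv_hull}, and the bend- or loop-based criteria) depend only on the shape of the tether. Homotopic definitions (e.g. Definition~\ref{def:local_visibility_homotopy}) depend on the homotopy classes of subpaths. For each definition we will prove the contrapositive: if $\gamma$ is not entangled, then $\mathcal{S}(\gamma)$ is not entangled. The common tool is a structural description of $\mathcal{S}(\gamma)$ in a polygonal domain \cite{hershberger1994computing}. It is a polygonal chain whose interior vertices are obstacle vertices where the chain turns around the obstacle. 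Moreover, it can be obtained from $\gamma$ by a homotopy that never leaves the region swept by $\gamma$ together with the obstacles it wraps, i.e.\ a ``tautening'' process.

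For the geometric definitions we rely on containment properties. For Definition~\ref{def:obstacle_free_conv_hull}, we will show $\mathcal{S}(\gamma)\subseteq \mathrm{convhull}(\gamma)$. Suppose $\mathrm{convhull}(\gamma)$ misses $\interior(\mathcal{O})$. Then the straight segment $l_{x_1,x_2}$ lies in the convex hull, hence in $\mathcal{W}_\mathrm{free}$. The convex hull is a convex obstacle-free set containing both $\gamma$ and $l_{x_1,x_2}$, so it contracts to give $\gamma\sim l_{x_1,x_2}$. Therefore $\mathcal{S}(\gamma)=l_{x_1,x_2}$, and $\mathrm{convhull}(\mathcal{S}(\gamma))=l_{x_1,x_2}\subseteq\mathrm{convhull}(\gamma)$ is obstacle-free. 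Other shape-based definitions (no self-crossings, no loops around obstacles, bounded winding) are handled similarly. We use that shortest paths in a homotopy class are simple and that their turning is monotone around each contacted vertex, so any such quantity for $\mathcal{S}(\gamma)$ is bounded by the corresponding quantity for $\gamma$.

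For the homotopic definitions, take Definition~\ref{def:local_visibility_homotopy}. Let $y_1=\mathcal{S}(\gamma)(t_1)$ and $y_2=\mathcal{S}(\gamma)(t_2)$ have an obstacle-free segment between them. We must show $\mathcal{S}(\gamma)_{[t_1,t_2]}\sim l_{y_1,y_2}$. Since $\mathcal{S}(\gamma)$ is a locally shortest path, its restriction to $[t_1,t_2]$ is the shortest path in its own class. If this restriction were not homotopic to the segment, the concatenation of the subpath with the reversed segment would be a non-contractible loop, so it would wind around some obstacle. The idea is to transfer this configuration back to $\gamma$. Take the homotopy $H$ from $\gamma$ to $\mathcal{S}(\gamma)$ and pick points $x_1,x_2$ on $\gamma$ that are mapped to (or visible along $H$ to) $y_1,y_2$. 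Then the subpath $\gamma_{[s_1,s_2]}$ and $l_{x_1,x_2}$ form the same nontrivial loop class, which contradicts the assumption that $\gamma$ is not entangled.

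The main obstacle is this transfer step. Points $y_i$ on the shortest path need not lie on $\gamma$, and the segment $l_{x_1,x_2}$ between the corresponding points of $\gamma$ might hit an obstacle even though $l_{y_1,y_2}$ does not. I would first try to choose $x_i$ as the intersections of $\gamma$ with the line through $y_1,y_2$, extended beyond the segment. Since $\mathcal{S}(\gamma)$ lies in the region bounded by $\gamma$ and the wrapped obstacles, the extended line must cross $\gamma$ on both sides. The obstacle-free part of that line then yields a valid pair on $\gamma$ with the same winding defect, and the subpath of $\gamma$ between them is homotopic to the tautened subpath. If this fails in degenerate cases, I would argue directly on the universal cover $\widetilde{\mathcal{W}}_\mathrm{free}$, where the claim reduces to a statement about lifted segments and geodesics.
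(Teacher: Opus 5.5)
Your argument for Definition~\ref{def:obstacle_free_conv_hull} is fine. The substantive case, Definition~\ref{def:local_visibility_homotopy}, is not proved: the transfer step you flag is the whole content, and the fix you sketch does not work as stated. ``The region bounded by $\gamma$ and the wrapped obstacles'' has no clear meaning once $\gamma$ self-intersects or winds repeatedly. Even if $\gamma$ meets the line through $y_1,y_2$, an arbitrary pair of intersection points is not a witness. The line may pass through obstacles between them, so they do not see each other, or the subpath of $\gamma$ between them may be homotopic to the chord. Matching points through $H$ fails for the same reasons.

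What is missing is a mechanism that selects the right pair. One that works is the following.
\begin{itemize}
\item Let $\sigma'$ be the maximal segment of the line through $y_1,y_2$ that contains $l_{y_1,y_2}$ and avoids $\interior(\mathcal{O})$. Any two of its points are mutually visible.
\item Let $\tilde y_i$ be the points over $y_i$ on the lift of $\mathcal{S}(\gamma)$ to $\widetilde{\mathcal{W}}_\mathrm{free}$, and let $\tilde\sigma$ be the lift of $\sigma'$ through $\tilde y_1$.
\item Let $\phi$ be the deck transformation taking the endpoint of the lift of $l_{y_1,y_2}$ from $\tilde y_1$ to $\tilde y_2$. Then $\phi\neq\mathrm{id}$ precisely because $\mathcal{S}(\gamma)_{[t_1,t_2]}\not\sim l_{y_1,y_2}$, so $\tilde\sigma$ and $\phi(\tilde\sigma)$ are disjoint lifts of $\sigma'$.
\item You must then show that the lifted shortest path meets each of these lifts in a single subsegment (geodesics in the universal cover are unique) and actually crosses it. This needs a local check at the obstacle vertices where $\sigma'$ touches an obstacle or ends, using that $\mathcal{S}(\gamma)$ bends only at obstacle vertices with the obstacle on the inner side.
\item It follows that each of $\tilde\sigma$ and $\phi(\tilde\sigma)$ separates (or contains one of) the common endpoints of the lifts of $\gamma$ and $\mathcal{S}(\gamma)$. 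So the lift of $\gamma$ meets $\tilde\sigma$ at some $s_1$ and $\phi(\tilde\sigma)$ at some $s_2$.
\item Now $\gamma(s_1),\gamma(s_2)\in\sigma'$ see each other. The lift of $l_{\gamma(s_1),\gamma(s_2)}$ from the first point stays in $\tilde\sigma$, hence cannot end on $\phi(\tilde\sigma)$, so $\gamma$ is entangled.
\end{itemize}
Your last sentence gestures at the universal cover, but none of these steps is in the proposal.

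The other cases are also incomplete. For the remaining shape-based definitions you invoke that shortest paths in a homotopy class are simple, which is false. A simple spiral $\gamma$ winding twice around a single obstacle and ending close to it has a shortest path $\mathcal{S}(\gamma)$ that runs twice along part of the obstacle boundary. The claim that every relevant quantity of $\mathcal{S}(\gamma)$ is bounded by that of $\gamma$ is asserted, not shown. The definitions that depend only on the homotopy class (3, 4, 5, 11 of \cite{battocletti2024entanglement}) follow immediately from $\mathcal{S}(\gamma)\sim\gamma$ and should be dispatched explicitly. The relaxed Definitions 8, 10, 11 are not addressed at all.

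For comparison, the paper does no case work: it reruns the proofs of Propositions 1--4 of \cite{battocletti2024entanglement} with the path set restricted to $\{\gamma'\in\Gamma_{x_\mathrm{a},x}:\gamma'\sim\gamma\}$. A self-contained argument like yours would be more informative, but only once the separation step above is supplied.
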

\begin{proof}
     The proof follows from those of \cite[Propositions 1--4]{battocletti2024entanglement}, by restricting the set of paths considered there to the set $\Gamma'_{x_\mathrm{a}, x} = \{\gamma'\in\Gamma_{x_\mathrm{a}, x} : \gamma' \sim \gamma \}$.
\end{proof}
Proposition \ref{prop:entanglement-shortest-path} provides a simple criterion to assess the existence of at least one entanglement-free tether configuration between the anchor point and any point in the environment for a given homotopy class.
More precisely, if the shortest path $\mathcal{S}(\gamma)$ in a homotopy class $[\gamma]$ is entangled, then there cannot exist any other non-entangled tether configuration in that homotopy class.
An example is visualized in Figure \ref{fig:triangle-check-a}.
Therefore, given a point $(x, s) \in \widebar{\mathcal{R}}$, it is sufficient to verify if the shortest path from the anchor point to $x$ in $s$ is entangled or not to either add the point $(x, s)$ to $\widebar{\mathcal{N}}$, as it is possible to reach it through an entanglement-free tether configuration, or to discard it.

Since we aim to build a simplicial complex model of $\widebar{\mathcal{N}}$, we focus now on applying the verification of the entanglement constraint described in Proposition \ref{prop:entanglement-shortest-path} to simplices.
We start by noting that all the binary entanglement definitions considered in \cite{battocletti2024entanglement} preserve the entanglement state of a non-entangled tether configuration over certain types of homotopies.
\begin{proposition}
    \label{prop:entanglement-linear-homotopy}
    Given two tether configurations $\gamma_1$, $\gamma_2$ such that $\gamma_1(0)=\gamma_2(0)=x_\mathrm{a}$ that are not entangled with respect to an entanglement definition $\delta$ of \cite{battocletti2024entanglement} (besides Definitions 2 and 3)\footnote{The proposition is not applicable to \cite[Definitions 2, 3]{battocletti2024entanglement} as those definitions regard only multi-robot systems, which are not considered here.}, and the path $\widebar{\alpha}_{\gamma_1(1), \gamma_2(1)}$, if $\gamma_2 \sim \gamma_1 \diamond \widebar{\alpha}_{\gamma_1(1), \gamma_2(1)}$ then for all points $x\in\widebar{\alpha}_{\gamma_1(1), \gamma_2(1)}$ there exists a tether configuration $\gamma'\in\Gamma_{x_\mathrm{a}, x}$ with $\gamma'\sim\gamma_1\diamond\widebar{\alpha}_{\gamma_1(1), x}$ that is not entangled with respect to $\delta$.
\end{proposition}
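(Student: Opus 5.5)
We reduce to shortest paths and then argue, one entanglement definition at a time, that non-entanglement holds for every intermediate shortest path. Write $a=\gamma_1(1)$, $b=\gamma_2(1)$, and for a point $x\in\widebar{\alpha}_{a,b}$ let $\eta_x=\gamma_1\diamond\widebar{\alpha}_{a,x}$. By Proposition \ref{prop:entanglement-shortest-path}, read contrapositively, the non-entangled $\gamma_1,\gamma_2$ force $\mathcal{S}(\gamma_1)$ and $\mathcal{S}(\gamma_2)=\mathcal{S}(\gamma_1\diamond\widebar{\alpha}_{a,b})$ to be non-entangled. It therefore suffices to prove that $\gamma'=\mathcal{S}(\eta_x)$ is not entangled for every $x$. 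That path lies in the required class $[\eta_x]$, and it reduces the statement to one about shortest (taut) paths, which are polygonal chains bending only at obstacle vertices.

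\textbf{Step 1 (geometry of the family).} I would study the family $x\mapsto\mathcal{S}(\eta_x)$ as $x$ moves along the segment or polygonal chain $\widebar{\alpha}_{a,b}$. This is the classical ``funnel'' picture. The taut paths $\mathcal{S}(\eta_x)$ all share a common prefix up to an apex vertex $v$. After $v$, each follows one of the two concave chains of the funnel spanned by $\mathcal{S}(\gamma_1)$, $\mathcal{S}(\gamma_2)$ and $\widebar{\alpha}_{a,b}$, and then a final straight segment to $x$.

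The consequence I need is that every $\mathcal{S}(\eta_x)$ lies inside the closed region $R$ bounded by $\mathcal{S}(\gamma_1)$, $\mathcal{S}(\gamma_2)$ and $\widebar{\alpha}_{a,b}$. Moreover, its vertex set is a subset of the vertices of $\mathcal{S}(\gamma_1)\cup\mathcal{S}(\gamma_2)$ together with $x$, and the turning direction at each vertex is inherited from $\mathcal{S}(\gamma_1)$ or $\mathcal{S}(\gamma_2)$. Because $\widebar{\alpha}_{a,b}$ is globally shortest, $R$ contains no obstacle interior that separates it from the tautening.

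\textbf{Step 2 (definition by definition).} The remaining entanglement definitions of \cite{battocletti2024entanglement} split into geometric and homotopic ones, and I would check each family using Step 1.

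For Definition \ref{def:obstacle_free_conv_hull}, I would show $\mathrm{convhull}(\mathcal{S}(\eta_x))\subseteq\mathrm{convhull}(\mathcal{S}(\gamma_1)\cup\mathcal{S}(\gamma_2))$. The convex hulls of $\mathcal{S}(\gamma_1)$ and $\mathcal{S}(\gamma_2)$ are each obstacle-free. I must additionally argue that the hull of their union contains no obstacle interior. For this I would use that a taut path whose hull is obstacle-free is convex-monotone (it turns in one direction only). The funnel structure then implies that $\mathcal{S}(\eta_x)$ is itself a convex chain, whose hull is contained in the obstacle-free hulls plus the region swept by segments to $\widebar{\alpha}_{a,b}$.

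For Definition \ref{def:local_visibility_homotopy}, and the other homotopy-based definitions, I would take a visible pair on $\mathcal{S}(\eta_x)$. Each subpath of $\mathcal{S}(\eta_x)$ is a subpath of a taut chain inherited from $\mathcal{S}(\gamma_i)$, possibly extended by a visible segment. Non-entanglement of $\mathcal{S}(\gamma_i)$ then gives the required homotopy, and the visible segment is handled by concatenating straight segments inside the obstacle-free triangle formed with the funnel.

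For bending-angle and length-type definitions, monotonicity follows directly: the turning angles of $\mathcal{S}(\eta_x)$ are bounded by those of $\mathcal{S}(\gamma_i)$, and its length is bounded by $\max\{\mathrm{len}(\mathcal{S}(\gamma_1)),\mathrm{len}(\mathcal{S}(\gamma_2))\}+\mathrm{len}(\widebar{\alpha}_{a,b})$.

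\textbf{Main obstacle.} I expect the main obstacle to be the convex-hull case. Obstacle-freeness of the two individual hulls does not obviously imply it for the intermediate paths. The argument will need to exploit that $\widebar{\alpha}_{a,b}$ is a globally shortest path, so the triangle-like region $R$ is obstacle-free; I would try to prove this first via a contradiction, exhibiting a shorter path from $a$ to $b$ around any obstacle inside $R$. A secondary difficulty is making the funnel description rigorous when $\widebar{\alpha}_{a,b}$ itself bends at obstacle vertices. I would handle this by splitting $\widebar{\alpha}_{a,b}$ into straight segments and applying the argument inductively, segment by segment.
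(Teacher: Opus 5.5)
Your reduction to $\mathcal{S}(\eta_x)$ via the contrapositive of Proposition~\ref{prop:entanglement-shortest-path} is fine. A self-contained geometric argument would also be a genuinely different route from the paper's. The paper only reduces the claim to the characterization of the sets $\mathcal{N}_{x_\mathrm{a},d}$ in Propositions~1--2 of \cite{battocletti2024entanglement}; Definition~4 is trivial because the tethers are loops, and Definitions~8, 10, 11 inherit from their base definitions. However, two load-bearing steps of your plan fail.

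\textbf{Emptiness of $R$.} $R$ is not obstacle-free \emph{because} $\widebar{\alpha}_{a,b}$ is globally shortest, and the contradiction you propose cannot be carried out. An obstacle lying between $x_\mathrm{a}$ and an obstacle-free straight chord $ab$ produces no shorter $a$--$b$ path. What empties $R$ is the hypothesis $\gamma_2\sim\gamma_1\diamond\widebar{\alpha}_{a,b}$. The closed curve formed by $\mathcal{S}(\gamma_1)$, $\widebar{\alpha}_{a,b}$ and $\mathcal{S}(\gamma_2)$ traversed backwards is null-homotopic, so it cannot enclose an obstacle. Global shortness plays a different role: it makes every bend of $\widebar{\alpha}_{a,b}$, like every bend of $\mathcal{S}(\gamma_i)$, a reflex vertex of $R$.

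\textbf{The Definition~6 step.} It rests on a false claim. A taut path whose convex hull misses $\interior(\mathcal{O})$ cannot bend at all: at a bend the hull contains a small triangle filling the inner wedge, which meets the obstacle corner. So $\mathcal{S}(\gamma_1)$, $\mathcal{S}(\gamma_2)$ are the segments $l_{x_\mathrm{a},a}$, $l_{x_\mathrm{a},b}$, and the hull of their union is the triangle $\mathrm{convhull}\{x_\mathrm{a},a,b\}$, which in general does meet obstacles. Take $x_\mathrm{a}=(0,10)$, $a=(-10,0)$, $b=(10,0)$ and the thin obstacle with vertices $(0,2)$, $(\pm0.1,-5)$. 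Then $\widebar{\alpha}_{a,b}$ bends at $(0,2)$ and all hypotheses hold, yet $(0,1)$ lies both in that triangle and in $\interior(\mathcal{O})$. Your fallback that $\mathcal{S}(\eta_x)$ is a ``convex chain'' fails for the same reason. What has to be shown is that every $x\in\widebar{\alpha}_{a,b}$ is visible from $x_\mathrm{a}$ inside $R$, so that $\mathcal{S}(\eta_x)=l_{x_\mathrm{a},x}$.

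\textbf{The induction over bends.} The segment-by-segment induction over the bends of $\widebar{\alpha}_{a,b}$ is circular. The step on $[a,v_1]$ needs $\mathcal{S}(\eta_{v_1})$ at the bend vertex $v_1$ as an already non-entangled endpoint, which is part of what you are proving. Work with $R$ globally instead. Since $R$ is obstacle-free and all bends of its three sides are reflex, summing turning angles along $\partial R$ gives $\theta_{x_\mathrm{a}}+\theta_a+\theta_b+\sum_i t_i=\pi$. Here the $\theta$'s are the interior angles at the three corners and the $t_i$ are the turning angles at the bends. So $R$ is a pseudo-triangle. A geodesic of $R$ from the corner $x_\mathrm{a}$ to a point $x$ of the opposite side therefore cannot bend at a vertex of that side that is not also on one of the other two sides. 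Otherwise one of the two pieces into which it cuts $R$ would have only two convex corners, contradicting the same count. This yields your vertex-inheritance claim with no induction. When the two sides at $x_\mathrm{a}$ are straight, it also gives exactly the visibility needed for Definition~6.

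\textbf{Two smaller points.} For Definition~9 the only nontrivial case is a free chord $l_{p,q}$ that leaves $R$; if it stays in $R$, the required homotopy is automatic because $R$ is simply connected. Your sketch does not treat the leaving case. For any length-threshold criterion, the bound $\max_i\mathrm{len}(\mathcal{S}(\gamma_i))+\mathrm{len}(\widebar{\alpha}_{a,b})$ is too weak. What actually holds, by convexity of the distance to $x_\mathrm{a}$ along geodesics of the universal cover, is $\mathrm{len}(\mathcal{S}(\eta_x))\le\max_i\mathrm{len}(\mathcal{S}(\gamma_i))$.
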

\begin{proof}[Proof]
    For \cite[Definitions 1, 5, 6, 7, 9]{battocletti2024entanglement} the proof follows from the characterization of the sets $\mathcal{N}_{x_\mathrm{a}, d}$ defined in \cite[Propositions 1, 2]{battocletti2024entanglement}.
    For \cite[Definition 4]{battocletti2024entanglement} the proof is trivial as $\widebar{\alpha}_{\gamma_1(1), \gamma_2(1)} = x_\mathrm{a}$ since $\gamma_1$ and $\gamma_2$ are loops.
    For \cite[Definitions 8, 10, 11]{battocletti2024entanglement} the proof is the same as that of the `base' entanglement definitions that are relaxed by \cite[Definitions 8, 10, 11]{battocletti2024entanglement}, and thus depends on the choice of the `base' definition.
\end{proof}
An example of the scenario described in Proposition \ref{prop:entanglement-linear-homotopy} is shown in Figure \ref{fig:triangle-check-b}.
Proposition \ref{prop:entanglement-linear-homotopy}, paired with the convexity of the $k$-simplices, results in the fact that the verification of the entanglement constraint at the vertices of a $k$-simplex is sufficient to ensure that the entanglement constraint is satisfied in all the points of that simplex.
\begin{proposition}[Entanglement-free triangle]
    \label{prop:entanglement-k-simplex}
	Consider a $k$-simplex $\kappa$, with $k\leq2$ and vertices denoted by $\{x_i\}_{i=1}^{k+1}$, and a path $\gamma$ between $x_\mathrm{a}$ and a point $x \in s$.
    If $\mathcal{S}(\gamma\diamond\widebar{\alpha}_{x, x_i})$ is not entangled with respect to the entanglement definition $\delta$ of \cite{battocletti2024entanglement} for all the $k+1$ vertices, then $\mathcal{S}(\gamma\diamond\widebar{\alpha}_{x, x'})$ is also not entangled for any point $x'\in \mathrm{convhull}(\kappa)$.
\end{proposition}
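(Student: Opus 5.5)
The plan is to reduce the statement to Proposition \ref{prop:entanglement-linear-homotopy} applied twice: first along an edge of $\kappa$, then along a segment from a vertex to a point of the opposite edge. Proposition \ref{prop:entanglement-shortest-path} lets us pass freely between "some entanglement-free path exists in a homotopy class" and "the shortest path in that class is entanglement-free". Throughout, I will use that the simplex lies in $\mathcal{W}_\mathrm{free}$, and I will interpret the homotopy classes via the implicit assumption that $\kappa$ together with $x$ sits in a simply connected patch. Concretely, the paths $\gamma\diamond\widebar{\alpha}_{x,x'}$ for $x'\in\kappa$ should all be "the same lift" of $\kappa$ in $\widetilde{\mathcal{W}}_\mathrm{free}$. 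This means that for $y,z\in\kappa$, $\gamma\diamond\widebar{\alpha}_{x,z}\sim\gamma\diamond\widebar{\alpha}_{x,y}\diamond l_{y,z}$, where $l_{y,z}\subset\kappa$ is the straight segment. Since $\kappa$ is convex and obstacle-free, $l_{y,z}$ is in fact the shortest path $\widebar{\alpha}_{y,z}$.

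The case $k=0$ is trivial. For $k=1$, let $x_1,x_2$ be the endpoints. Set $\gamma_1=\mathcal{S}(\gamma\diamond\widebar{\alpha}_{x,x_1})$ and $\gamma_2=\mathcal{S}(\gamma\diamond\widebar{\alpha}_{x,x_2})$; both are non-entangled by hypothesis, and $\gamma_2\sim\gamma_1\diamond\widebar{\alpha}_{x_1,x_2}$ by the homotopy identity above. Proposition \ref{prop:entanglement-linear-homotopy} then yields, for every $x'\in l_{x_1,x_2}$, a non-entangled $\gamma'\sim\gamma_1\diamond\widebar{\alpha}_{x_1,x'}\sim\gamma\diamond\widebar{\alpha}_{x,x'}$. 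By Proposition \ref{prop:entanglement-shortest-path} (contrapositive), $\mathcal{S}(\gamma\diamond\widebar{\alpha}_{x,x'})$ is non-entangled.

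For $k=2$, take any $x'\in\mathrm{convhull}(\kappa)$. If $x'=x_3$ we are done. Otherwise, draw the ray from $x_3$ through $x'$ and let it meet the edge $l_{x_1,x_2}$ at a point $y$. By the $k=1$ case, $\mathcal{S}(\gamma\diamond\widebar{\alpha}_{x,y})$ is non-entangled. Now apply the $k=1$ argument again to the segment $l_{x_3,y}$, using the two non-entangled endpoint paths $\mathcal{S}(\gamma\diamond\widebar{\alpha}_{x,x_3})$ and $\mathcal{S}(\gamma\diamond\widebar{\alpha}_{x,y})$. Since $x'\in l_{x_3,y}$, this gives the claim.

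The main obstacle is justifying the homotopy identity $\gamma\diamond\widebar{\alpha}_{x,z}\sim\gamma\diamond\widebar{\alpha}_{x,y}\diamond\widebar{\alpha}_{y,z}$. It is equivalent to the loop $\widebar{\alpha}_{x,y}\diamond\widebar{\alpha}_{y,z}\diamond\widebar{\alpha}_{z,x}$ being null-homotopic. This fails in general if $x$ is far from $\kappa$ and the shortest paths to different points of $\kappa$ wrap around an obstacle differently. I would first try to settle it by reading the statement as in the simplicial complex construction, where $x$ is a vertex of (or adjacent to) $\kappa$. In that setting all the $\widebar{\alpha}$ involved are segments inside the obstacle-free convex set $\kappa$ (or in an adjacent obstacle-free simplex), so the triangle loop bounds a disk in $\mathcal{W}_\mathrm{free}$ and is trivially contractible. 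If needed, I would state this as the standing assumption, namely that the classes are taken relative to a fixed lift of $\kappa$ to $\widetilde{\mathcal{W}}_\mathrm{free}$.
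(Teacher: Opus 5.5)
Your proposal is correct and follows the same route as the paper, whose proof is only the one-line appeal to Proposition \ref{prop:entanglement-linear-homotopy} plus convexity of the simplex. Your edge-then-ray decomposition, and your use of Proposition \ref{prop:entanglement-shortest-path} to pass back to the shortest path, are exactly the details that one-liner leaves implicit. You are also right that $\gamma\diamond\widebar{\alpha}_{x,z}\sim\gamma\diamond\widebar{\alpha}_{x,y}\diamond\widebar{\alpha}_{y,z}$ is an unstated assumption: in Algorithm \ref{alg:entanglement-free-simplicial-complex} the base point $x=p$ lies inside the triangle itself, so the identity holds by convexity of $\kappa$. Your ``adjacent simplex'' variant is not automatic, since the union of two triangles need not be convex, and it would need your fixed-lift convention.
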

\begin{proof}
     The proof follows from Proposition \ref{prop:entanglement-linear-homotopy} and the convexity of the $k$-simplices.
\end{proof}
The key result of Proposition \ref{prop:entanglement-k-simplex} is that it is possible to verify if all the points in a triangle can be reached through an entanglement-free tether configuration by evaluating the entanglement definition $\delta$ only at its vertices. An example of this process is shown in Figure \ref{fig:triangle-check-c}.
\begin{figure}
    \centering
    \begin{subfigure}[b]{0.49\linewidth}
        \centering
        \includegraphics[width=0.9\linewidth]{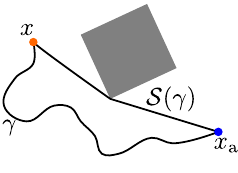}
        \caption{}
        \label{fig:triangle-check-a}
    \end{subfigure}
    \hfill
    \begin{subfigure}[b]{0.49\linewidth}
        \centering
        \includegraphics[width=\linewidth]{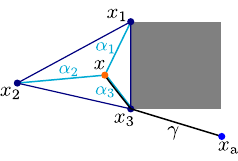}
        \caption{}
        \label{fig:triangle-check-b}
    \end{subfigure}\\
    \vspace{0.3cm}
    \begin{subfigure}[b]{0.49\linewidth}
        \centering
        \includegraphics[width=\linewidth]{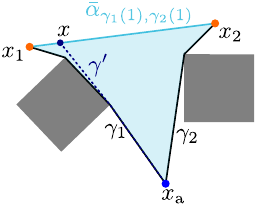}
        \caption{}
        \label{fig:triangle-check-c}
    \end{subfigure}
    \hfill
    \begin{subfigure}[b]{0.49\linewidth}
        \centering
        \includegraphics[width=\linewidth]{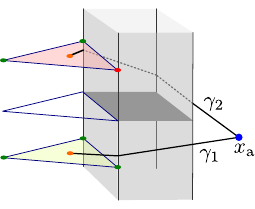}
        \caption{}
        \label{fig:triangle-check-d}
    \end{subfigure}
    \caption{(a) Example of verification of the entanglement state of a tether configuration $\gamma$ via the shortened path $\mathcal{S}(\gamma)$. (b) Example of path $\bar{\alpha}_{x_1, x_2}$ between two tether configurations $\gamma_1$, $\gamma_2$ along which the entanglement state is constant. (c) Example of verification of the vertices of a triangle to determine if all its points are reachable through an entanglement-free tether configuration in a given homotopy class. (d) Example of the same triangle being reached through two different homotopy classes, resulting in different admissibility with respect to the entanglement constraint.}
    \label{fig:triangle-check}
\end{figure}

\begin{remark}
    Propositions \ref{prop:entanglement-shortest-path}, \ref{prop:entanglement-linear-homotopy}, and \ref{prop:entanglement-k-simplex} have been proven for the entanglement definitions of \cite{battocletti2024entanglement}, as that paper lists the large majority of the currently existing entanglement definitions.
    Nonetheless, we remark that the proposed approach is general and can be applied also to new entanglement definitions, as long as Propositions \ref{prop:entanglement-shortest-path}, \ref{prop:entanglement-linear-homotopy}, and \ref{prop:entanglement-k-simplex} hold for the entanglement definition being considered.
\end{remark}

\subsection{Simplicial complex model of the entanglement-free configuration space of a tethered robot}
\label{sec:simplicial-complex-model}
Equipped with the results from the previous section, we now introduce an algorithm to efficiently build a topological model of the entanglement-free configuration space $\widebar{\mathcal{N}}$.
The proposed topological model of $\widebar{\mathcal{N}}$ is based on the incremental construction of a simplicial complex on a subset of the universal covering space $\widetilde{\mathcal{W}}_\mathrm{free}$ that respects the tether length constraint and the entanglement avoidance constraint.
Taking inspiration from \cite{preparata1985computational, battocletti2025efficient}, we propose to construct the simplicial complex starting from a constrained triangulation $\mathcal{T}$ of $\mathcal{W}_\mathrm{free}$ (see Figure \ref{fig:triangulation-b}).\footnote{As shown in Figure \ref{fig:triangulation}, the base triangulation yields also a primal graph $\mathcal{G} = (\mathcal{V}, \mathcal{E})$, whose vertex $\mathcal{V}$ set coincides with $\mathcal{T}_0$ and edge set $\mathcal{E}$ with $\mathcal{T}_1$, and a dual graph $\mathcal{G}' = (\mathcal{V}', \mathcal{E}')$, whose vertex set $\mathcal{V}'$ is formed by a set containing one point from the interior of each triangle (so that each triangle is uniquely identified by a point in $\mathcal{V}'$, and vice versa) and whose edge set $\mathcal{E}'$ is composed by pairs of points ${v_1, v_2}, v_1, v_2 \in \mathcal{V}'$ indicating adjacency between the triangles corresponding to the points $v_1$ and $v_2$. These graphs are leveraged during the lifting process, as discussed in \cite{battocletti2025efficient}.}
The triangulation is constrained by imposing the coincidence of $\mathcal{T}_0$ with the set of vertices of the polygonal obstacles $\{O_i\}_{i=1}^n$, and by imposing that all the edges defining the boundary of the polygonal obstacles are part of $\mathcal{T}_1$ \cite{preparata1985computational}.
The triangulation $\mathcal{T}$ is then lifted to the universal covering space of $\mathcal{W}_\mathrm{free}$ \cite{battocletti2025efficient}, resulting in a simplicial complex with different branches corresponding to the different homotopy classes in which the environment can be partitioned \cite{hershberger1994computing}.
While lifting $\mathcal{T}$ the tether length constraint is verified to obtain a simplicial complex model of $\widebar{\mathcal{R}}$, and the entanglement avoidance one to obtain a simplicial complex model of $\widebar{\mathcal{N}}$.
\begin{figure}
    \centering
    \begin{subfigure}[b]{0.49\linewidth}
        \centering
        \includegraphics[width=\linewidth]{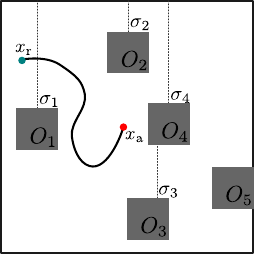}
        \caption{}
        \label{fig:triangulation-a}
    \end{subfigure}
    \hfill
    \begin{subfigure}[b]{0.49\linewidth}
        \centering
        \includegraphics[width=\linewidth]{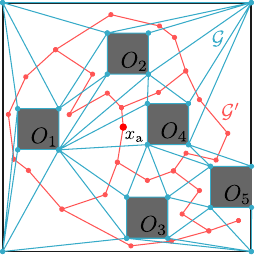}
        \caption{}
        \label{fig:triangulation-b}
    \end{subfigure}\\
    \vspace{0.3cm}
    \begin{subfigure}[b]{0.49\linewidth}
        \centering
        \includegraphics[width=\linewidth]{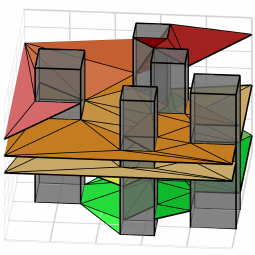}
        \caption{}
        \label{fig:triangulation-c}
    \end{subfigure}
    \hfill
    \begin{subfigure}[b]{0.49\linewidth}
        \centering
        \includegraphics[width=\linewidth]{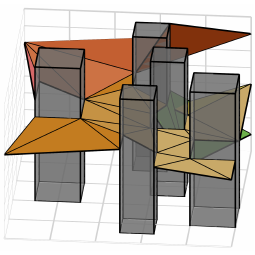}
        \caption{}
        \label{fig:triangulation-d}
    \end{subfigure}
    \caption{(a) Example of 2D workspace with $n=5$, $m=4$. (b) Constrained Delaunay triangulation $\mathcal{T}$ of the free workspace $\mathcal{W}_\mathrm{free}$. The primal graph $\mathcal{G}$ is shown in blue, and the dual one $\mathcal{G}'$ in red. (c) Simplicial complex model $\widetilde{\mathcal{R}}$ of the length-reachable configuration space. (d) Simplicial complex model $\widetilde{\mathcal{N}}^{(9)}$ of the entanglement-free configuration space with respect to the entanglement definition 9 of \cite{battocletti2024entanglement}.}
    \label{fig:triangulation}    
\end{figure}
To this end, we exploit the result of Proposition \ref{prop:entanglement-k-simplex} to efficiently verify the length and entanglement constraints during the construction of the simplicial complex. This way, triangles are added only if they satisfy both the tether length constraint and the entanglement one.
Each point in the resulting simplicial complex model (i) is reachable within the maximum tether length in the branch of the universal covering space corresponding to its homotopy class (which is guaranteed by the original algorithm from \cite{battocletti2025efficient}), and (ii) admits the existence of a tether configuration that is not entangled under the entanglement definition being considered.
This way, we obtain a simplicial complex model of $\widebar{\mathcal{N}}$.
Thanks to the result of Proposition \ref{prop:entanglement-k-simplex}, the increase in terms of computational cost due to the verification of the entanglement constraint is equal to one evaluation of the entanglement definition $\delta$ for each vertex of the simplices $\widetilde{\mathcal{N}}$. Moreover, the shortest paths $\mathcal{S}(\gamma\diamond\alpha_i), i\in\{1, 2, 3\}$ from $x_\mathrm{a}$ to the vertices of a simplex are already available from the verification of the length constraint.

As in the original model from \cite{battocletti2025efficient}, the simplicial complex model can contain multiple copies of a given triangle, each corresponding to a different homotopy class though which that triangle can be reached, as shown in Figure \ref{fig:triangle-check-d}. For each copy of a given triangle, the entanglement condition is verified in its three vertices, and only if in all the three vertices the shortest path is entanglement-free, the triangle is added. In the example of Figure \ref{fig:triangle-check-d}, the triangle at the bottom is reached through a homotopy class in which all the three vertices satisfy the entanglement constraint, and is therefore added to the simplicial complex. Contrarily, for the triangle on top, one of the vertices (marked in red) does not satisfy the entanglement constraint, resulting in the triangle being excluded from the simplicial complex.
The modified version of the algorithm from \cite{battocletti2025efficient} is reported in Algorithm \ref{alg:entanglement-free-simplicial-complex}.
\begin{algorithm}
    \caption[Simplicial complex model of the en\-tan\-gle\-ment-free workspace]{Simplicial complex model of $\widebar{\mathcal{N}}$ and $\widebar{\mathcal{R}}$}
	\label{alg:entanglement-free-simplicial-complex}
	\begin{algorithmic}[1]
		\State \textbf{Inputs}: $\mathcal{T}$, $x_\mathrm{a}$, $l$
            \State Compute $\mathcal{G}$ and $\mathcal{G}'$ from $\mathcal{T}$, add $x_\mathrm{a}$ to the vertex set of $\mathcal{G}'$
            \State Initialize $\widetilde{\mathcal{R}}\gets\emptyset$, $\widetilde{\mathcal{N}}\gets\emptyset$, $q_\mathrm{open} \gets \{(x_\mathrm{a}, \mathrm{``\;"})\}$, $q_\mathrm{closed} \gets \emptyset$, $V \gets \emptyset$, $\alpha_{x_\mathrm{a}, p} \gets \emptyset$
            \While{$q_\mathrm{open} \neq \emptyset$} \label{line:main_loop_start}

                \Statex \quad\; \textcolor{gray}{\# Select next triangle to add to $\widetilde{\mathcal{N}}$}
                \State Pop $(p, s)$ from $q_\mathrm{open}$
                \State \textbf{If} $(p, s) \in q_\mathrm{closed}$ \textbf{then} skip to next iteration
                \State $V \gets \mathtt{vertices}(\mathtt{triangle}(p))$\label{line:triangle_check_start}
                \State $\alpha_{x_\mathrm{a}, p} \gets $ shortest path in $\Gamma_{x_\mathrm{a}, p}$ such that $h(\alpha_{x_\mathrm{a}, p}) = s$

                \Statex \quad\; \textcolor{gray}{\# Verification of tether length constraint}
                \For{$v\in V$}
                    \State \textbf{If} $\mathrm{len}(\mathcal{S}({\alpha_{x_\mathrm{a}, p}}\diamond\alpha_{p, v})) > l$ \textbf{then} go to line \ref{line:main_loop_start}
                \EndFor

                \Statex \quad\; \textcolor{gray}{\# Add simplices to $\widetilde{\mathcal{R}}$}
                \State Compute 0-simplices $(v_i, s'_i), v_i\in V, s'_i = s\cdot h(\widebar{\alpha}_{p, v})$
                \State $\mathtt{add\_simplices}(V, \widetilde{\mathcal{R}})$

                \Statex \quad\;\textcolor{gray}{\# Verification of entanglement constraint}
                \For{$v\in V$}
                        \State \textbf{If} $\delta(\mathcal{S}({\alpha_{x_\mathrm{a}, p}}\diamond\alpha_{p, v})) = 1$ \textbf{then} skip to line \ref{line:add_adjacent_triangles}
                \EndFor

                \Statex \quad\;\textcolor{gray}{\# Add simplices to $\widetilde{\mathcal{N}}$}
                \State Compute 0-simplices $(v_i, s'_i), v_i \in V, s'_i = s\cdot h(\widebar{\alpha}_{p, v})$
                \State $\mathtt{add\_simplices}(V, \widetilde{\mathcal{N}})$

                \Statex \quad\; \textcolor{gray}{\# Add adjacent triangles to open queue}
                \For{$p' \in \mathtt{adjacent}(p)$} \label{line:add_adjacent_triangles}
                \State $s' \gets s \cdot h(\widebar{\alpha}_{p, p'})$ 
                \State Add $(p', s')$ to $q_\mathrm{open}$ 
                \EndFor

                \Statex \quad\; \textcolor{gray}{\# Mark current triangle as visited, then continue}
                \State Add $(p, s)$ to $q_\mathrm{closed}$

            \EndWhile
            \State \textbf{Return} $\widetilde{\mathcal{R}}$ and $\widetilde{\mathcal{N}}$
	\end{algorithmic}
\end{algorithm}

Algorithm \ref{alg:entanglement-free-simplicial-complex} returns two simplicial complexes, $\widetilde{\mathcal{R}}$ and $\widetilde{\mathcal{N}}$. The first corresponds to the simplicial complex model of the length-constrained configuration space, while the second to the simplicial complex model of the entanglement-free configuration space. The former is computed at no extra cost with respect to the latter, and will be useful for the disentanglement path planning described in Section \ref{sec:disentanglement-path-planning}.

Both $\mathcal{U}(\widetilde{\mathcal{N}})$ and $\mathcal{U}(\widetilde{\mathcal{R}})$ are simply connected manifolds with a boundary \cite{edelsbrunner2010computational}.
Between the two simplicial complexes it holds, by construction, that $\widetilde{\mathcal{N}} \subseteq \widetilde{\mathcal{R}}$, and therefore also that $\mathcal{U}(\widetilde{\mathcal{N}}) \subseteq \mathcal{U}(\widetilde{\mathcal{R}})$.
Both simplicial complexes also yield a primal and a dual graph, corresponding to a lifted version of the graphs $\mathcal{G}$ and $\mathcal{G}'$. 
We denote by $\widetilde{\mathcal{G}}_{\tilde{\mathcal{R}}}$ and $\widetilde{\mathcal{G}}_{\tilde{\mathcal{R}}}'$ the lifted graphs corresponding to $\widetilde{\mathcal{R}}$, and as $\widetilde{\mathcal{G}}_{\tilde{\mathcal{N}}}$ and $\widetilde{\mathcal{G}}_{\tilde{\mathcal{N}}}'$ for the graphs corresponding to $\widetilde{\mathcal{N}}$.
The lifted graphs $\widetilde{\mathcal{G}}_i$ and $\widetilde{\mathcal{G}}'_i$, $i \in \{\widetilde{\mathcal{R}}, \widetilde{\mathcal{N}}\}$, represent useful data structures for the planning phase, which will be discussed in Section \ref{sec:path_planning}.

As a concluding remark, we note that, as observed in \cite{battocletti2025efficient}, the simplicial complex model can be conservative at its boundaries. In fact, simplices are excluded from the simplicial complex if any of their vertices does not satisfy the entanglement definition, even if part of the simplex satisfies it. A possible strategy to mitigate this limitation of Algorithm \ref{alg:entanglement-free-simplicial-complex} is discussed in Appendix \ref{appendix:reduction-of-conservativeness}.

\section{Entanglement-free path planning}
\label{sec:path_planning}
In this section we detail how the simplicial complex models $\widetilde{\mathcal{R}}$ and $\widetilde{\mathcal{N}}$ can be used both for path planning and for tether disentanglement.

\subsection{Path planning on the simplicial complex model}
The simplicial complex model $\widetilde{\mathcal{N}}$ can be used to compute paths that allow to navigate the workspace while maintaining the tether in a non-entangled configuration. 
In fact, by construction, all the paths computed in $\widetilde{\mathcal{N}}$ lie in $\widebar{\mathcal{N}}$, meaning that for each point along the path there exists an entanglement-free tether configuration, and therefore the path can be followed while keeping the tether in a non-entangled state.
\begin{proposition}[Entanglement-free path in $\widetilde{\mathcal{N}}$]
    Given a path $\alpha:[0,1]\rightarrow\mathcal{U}(\widetilde{\mathcal{N}})$ of the robot, there exists a path of entanglement-free tether configurations with one endpoint fixed in $\widetilde{x}_\mathrm{a}$ and the other moving along $\alpha$.
\end{proposition}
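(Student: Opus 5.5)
The plan is to read off, for every point on the lifted path, the tether configuration given by the shortest path in the corresponding homotopy class. Each such tether is then entanglement-free by the construction of $\widetilde{\mathcal{N}}$, and it remains to check that these tethers vary continuously with the parameter.

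\emph{Step 1: the homotopy class along $\alpha$.} Since $\alpha$ takes values in $\mathcal{U}(\widetilde{\mathcal{N}})$, which lives in the universal covering space $\widetilde{\mathcal{W}}_\mathrm{free}$, every $\alpha(t)$ is a reduced configuration $(x(t), s(t))$. Here $x(t)$ is the projection of $\alpha(t)$ to $\mathcal{W}_\mathrm{free}$. The class $s(t)$ is the homotopy class of $\eta\diamond\pi(\alpha|_{[0,t]})$, where $\eta$ is any path from $x_\mathrm{a}$ to $\pi(\alpha(0))$ whose lift ends at $\alpha(0)$ and $\pi$ is the covering projection. We then define the candidate tether $\gamma_t=\mathcal{S}(\eta\diamond\pi(\alpha|_{[0,t]}))$, the shortest path from $x_\mathrm{a}$ to $x(t)$ in class $s(t)$.

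\emph{Step 2: each $\gamma_t$ is entanglement-free.} Fix $t$. The point $\alpha(t)$ lies in some simplex $\kappa$ of $\widetilde{\mathcal{N}}$ with $k\le 2$; it belongs to a closed triangle of the complex, or to one of its faces. By Algorithm \ref{alg:entanglement-free-simplicial-complex}, the triangle was added only after checking that $\mathcal{S}(\alpha_{x_\mathrm{a},p}\diamond\alpha_{p,v})$ is not entangled at all its vertices $v$, in the homotopy class of that lifted copy. Proposition \ref{prop:entanglement-k-simplex} therefore gives that $\mathcal{S}(\alpha_{x_\mathrm{a},p}\diamond\widebar{\alpha}_{p,x'})$ is not entangled for every $x'\in\mathrm{convhull}(\kappa)$. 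Because the triangle is convex and the lift is a homeomorphism on it, the class of $\alpha_{x_\mathrm{a},p}\diamond\widebar{\alpha}_{p,x(t)}$ coincides with $s(t)$. Hence $\gamma_t$ is not entangled, and its length is at most $l$ since $\widetilde{\mathcal{N}}\subseteq\widetilde{\mathcal{R}}$. Proposition \ref{prop:entanglement-shortest-path} is used only implicitly: it justifies that testing shortest paths loses no generality.

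\emph{Step 3: continuity of $t\mapsto\gamma_t$.} This is the main obstacle. The claim is that shortest homotopic paths in a polygonal domain depend continuously on the endpoint within a fixed lift; this is the standard funnel/shortest-path-map argument of \cite{hershberger1994computing}. I would argue as follows.
\begin{itemize}
\item For $t'$ near $t$, the lifted points $\alpha(t)$ and $\alpha(t')$ lie in a common star of simplices.
\item The concatenation $\gamma_t\diamond\widebar{\alpha}_{x(t),x(t')}$ lies in class $s(t')$.
\item Its length therefore bounds $\mathrm{len}(\gamma_{t'})$ from above, and by symmetry the lengths vary continuously.
\item Uniqueness of the shortest path in each class for polygonal domains, together with compactness of the set of paths of length at most $l$ under uniform convergence (Arzel\`a--Ascoli, after constant-speed reparametrization), then gives uniform convergence $\gamma_{t'}\to\gamma_t$.
\end{itemize}
Any limit point is a path in class $s(t)$ of length at most $\mathrm{len}(\gamma_t)$, so it equals $\gamma_t$ by uniqueness.

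This produces the continuous family $\{\gamma_t\}_{t\in[0,1]}$ of entanglement-free configurations. Each has first endpoint at $x_\mathrm{a}$ (lifted, $\widetilde{x}_\mathrm{a}$) and the other endpoint following $\alpha$, as required. If one insists the tether has fixed length $l$ rather than at most $l$, the slack $l-\mathrm{len}(\gamma_t)$ can be absorbed continuously. The natural way is a small fold along the shortest path near the anchor, provided this preserves non-entanglement for the definition at hand. I would flag this as a modeling remark rather than part of the core argument.
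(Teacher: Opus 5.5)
Your proposal is correct and follows the same idea as the paper: for each point of the lifted path, take the shortest tether in the homotopy class encoded by that point of the simply connected lift, which is entanglement-free by construction of $\widetilde{\mathcal{N}}$. The differences are minor: you obtain non-entanglement of $\gamma_t$ directly from Proposition~\ref{prop:entanglement-k-simplex} and the vertex checks of Algorithm~\ref{alg:entanglement-free-simplicial-complex}, whereas the paper uses the existence of some non-entangled tether at each point together with Proposition~\ref{prop:entanglement-shortest-path}, and your Step~3 proves the continuity of $t\mapsto\gamma_t$, which the paper only asserts.
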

\begin{proof}
    The path $\alpha$ is always inside $\mathcal{U}(\widetilde{\mathcal{N}})$, for which, by construction, we know that at every point there exists a non-entangled tether configuration.
    Since $\mathcal{U}(\widetilde{\mathcal{N}})$ is simply connected, and therefore all paths joining two points lie in a single homotopy class, from Proposition \ref{prop:entanglement-shortest-path} we know that the shortest path between any point in $\widetilde{\mathcal{N}}$ and the anchor point is not entangled. Therefore, the family of shortest paths between the points along $\alpha$ and the anchor point $x_\mathrm{a}$ yields a valid path of entanglement-free tether configurations.
\end{proof}
Path planning can be performed in several ways on $\widetilde{\mathcal{N}}$. Both the lifted primal graph $\widetilde{\mathcal{G}}_{\tilde{\mathcal{N}}}$ and the lifted dual graph and $\widetilde{\mathcal{G}}_{\tilde{\mathcal{N}}}'$ are suitable for this purpose by using a graph search algorithm such as  Depth First Search, Dijkstra, and A$^\star$ \cite{latombe1991robot}. In particular, since $\widetilde{\mathcal{G}}_{\tilde{\mathcal{N}}}'$ is a tree, it can be used to quickly compute all the topologically distinct paths between the robot location and the goal region \cite{battocletti2025efficient}. 
Alternatively, since $\mathcal{U}(\widetilde{\mathcal{N}})$ is a simply-connected manifold, sampling-based path planning algorithms such as RRT and RRT$^\star$, as well as other classes of path planning algorithms \cite{paden2016survey, lavalle2006planning}, can be applied to the simplicial complex model $\widetilde{\mathcal{N}}$.

In case of a taut tether, which is a common type of tether, usually maintained through a winch and a dedicated tether control system, the paths computed on the simplicial complex model are directly guaranteed to be traversable while maintaining the tether in an entanglement-free tether configuration, even when considering the dynamics of the tether and of the robot.\footnote{In this sense, we note that the low-tension tether models introduced in Section \ref{sec:fem-model} are not usually applicable to the taut-tether case, and we refer the interested reader to works that consider this case \cite{lima2025tension, mcgarey2016line, tognon2017dynamics}.}
In fact, as long as the robot remains inside the manifold $\mathcal{U}(\widetilde{\mathcal{N}})$, the shortest path between its location and the anchor point -- which coincides with the taut tether configuration -- is entanglement-free by construction of $\widetilde{\mathcal{N}}$.
On the contrary, this does not hold for the more general case of a slack tether that we consider in this work. In fact, the existence of a path along which the tether remains not entangled, does not guarantee that this can be achieved when considering the dynamics of the robot and that of the tether. For this reason, we propose the trajectory generation step introduced in Section \ref{sec:trajectories}. 
Therefore, it is convenient to search for multiple candidate paths $\{\alpha_i\}_{i=1}^\nu$, so that multiple trajectory generation problems can be run in parallel to increase the effectiveness of the planning algorithm.

\subsection{Disentangling path planning}
\label{sec:disentanglement-path-planning}
When the initial tether configuration $\gamma_0$ is in an entangled state, before searching for a path to the goal region, it is crucial to disentangle the tether.
To this end, we use a combination of the length-constraint simplicial complex model $\widetilde{\mathcal{R}}$, corresponding to the subset of the workspace that the robot can physically reach without exceeding the tether length constraint, and the entanglement-free simplicial complex model $\widetilde{\mathcal{N}}$, corresponding to all the points that can be reached through a non-entangled tether configuration. 
To find a path leading from the initial, entangled, tether configuration, to a non-entangled one, we run a path planning algorithm on $\widetilde{\mathcal{R}}$ (or, equivalently, on one of the two graphs $\widetilde{\mathcal{G}}_{\tilde{\mathcal{R}}}$ and $\widetilde{\mathcal{G}}_{\tilde{\mathcal{R}}}'$), until we find a path to a point in $\widetilde{\mathcal{R}}$ that is also in $\widetilde{\mathcal{N}}$, i.e., for which there exists a non-entangled tether configuration.
Similarly to the path planning case described above, the returned path does not guarantee the existence of a dynamically feasible trajectory for the robot, but rather identifies a homotopy class through which such a trajectory should be searched for. 

\section{Computation of dynamically feasible entanglement-free trajectories}
\label{sec:trajectories}
Once one or more candidate paths are selected, they can be used as the starting point for the generation of dynamically feasible trajectories by solving a homotopy-constrained trajectory generation problem \cite{degroot2024topology}, which we formulate as a mixed-integer optimization problem taking inspiration from \cite{park2015homotopy}.

In the optimization problem the discrete variables emerge from the presence of a nonconvex polygonal state constraint, which is defined as the union of a set of triangles, as detailed in Section \ref{sec:homotopy-constraint}, as well as from the tether dynamics, which are hybrid due to the presence of obstacles, as discussed in Section \ref{sec:fem-model}.

\subsection{Homotopy constraint for the trajectory generation}
\label{sec:homotopy-constraint}
We describe now the formulation of the state constraints that will be used in the trajectory generation problem. The role of these constraints is twofold: first, they guarantee obstacle avoidance; second, and most important, they constrain the trajectory to lie in the same homotopy class of the candidate path $\alpha$ returned by the path planning operation performed at the previous step over the simplicial complex model $\widetilde{\mathcal{N}}$. 
To impose the homotopy constraint, we exploit the cell decomposition of $\mathcal{W}_\mathrm{free}$ that naturally emerges from the triangulation $\mathcal{T}$ computed in Section \ref{sec:modeling}, and we use it to formulate a polygonal state constraint following the approach proposed in \cite{park2015homotopy}. In particular, we constrain the points composing the trajectory to lie inside the union of the triangles crossed by the path $\alpha$, as shown in Figure \ref{fig:h-constrained-traj-planning}. 
By doing so, the location of the robot along the trajectory is constrained to remain in the same homotopy class as the candidate path $\alpha$. 

Given a path $\alpha$, we denote the set of triangles traversed by $\alpha$ as $T_\alpha = \{\tau_i\}_{i=1}^{|T_\alpha|}$, where the value of $|T_\alpha|$ depends on $\alpha$ and $\mathcal{T}$.
Each triangle $\tau_i$ imposes a convex state constraint $A_i x + b_i \leq 0$, with $A_i$ and $b_i$ representing the linear constraints corresponding to the edges of triangle $\tau_i$. 
This results in a mixed-integer state constraint of the type
\begin{equation}
    \label{eq:state_constraint_bigM}
    \begin{aligned}
        &A_i x + b_i \leq M_i (1 - \Delta_i), i\in\{1, \ldots, |T_\alpha|\},\\
        &\sum_{i=1}^{|T_\alpha|} \Delta_i = 1,
    \end{aligned}
\end{equation}
with $\Delta_i \in \{0, 1\}$ a binary variable, with the standard interpretation that $\Delta_i = 1 \Rightarrow x\in \tau_i$, and $M_i$ an appropriately large number used to render the constraint active or practically inactive, in accordance to the big-M method.

\subsection{Mixed-integer optimization-based formulation of the trajectory generation problem}
\label{sec:mixed-integer-problem}
We define now the homotopy-constrained trajectory generation problem.
The inputs of the problem are the initial tether configuration $\gamma_0$ and the candidate path $\alpha$, which starts from $\gamma_0(1)$. The goal is to compute a dynamically feasible trajectory in the same homotopy class as $\alpha$.
First, we consider a discrete-time dynamic model of the robot, denoted as
\begin{equation}
    q_\mathrm{r}(k+1) = f_\mathrm{r}(q_\mathrm{r}(k), u(k)),
\end{equation}
where $q_\mathrm{r}\triangleq [p^\top, v^\top]^\top$ is the state of the robot, composed by its position $p\in\mathbb{R}^2$ and velocity $v\in\mathbb{R}^2$, and $u$ represents the control input of the robot. A robot trajectory is then defined as
\begin{equation}
    \bm{q}_\mathrm{r} = [q^\top_\mathrm{r}(0), q^\top_\mathrm{r}(1), \ldots, q^\top_\mathrm{r}(N)]^\top,
\end{equation}
corresponding to a sequence of robot states at consecutive time instances spaced by $\Delta t$ time units, over a period of length $N \Delta t$. 
The value of $N$ is selected to be large enough based on the reference path $\alpha$, so that the robot is able to reach its goal within $N\Delta t$ time units. 
The trajectory $\bm{q}_\mathrm{r}$ is constrained to lie in $[\alpha]$ through the introduction of a set of state constraints of the form \eqref{eq:state_constraint_bigM} for each robot position $p(k), k\in\{0, ..., N\}$ along the trajectory.
The trajectory $\bm{q}_\mathrm{r}$ is fully defined by a sequence of control inputs
\begin{equation}
    \bm{u} = [u^\top(0), u^\top(1), \ldots, u^\top(N-1)]^\top.
\end{equation}
The input sequence $\bm{u}$ is the optimization variable for which we solve the trajectory generation problem, and it is penalized in the cost function through a quadratic cost term in order to promote the computation of energy-efficient trajectories.

\begin{figure}
    \centering
    \begin{subfigure}[b]{0.49\linewidth}
        \centering
        \includegraphics[width=\linewidth]{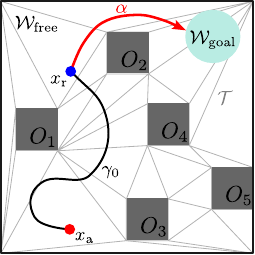}
        \caption{}
        \label{fig:h-constrained-traj-planning-a}
    \end{subfigure}
    \hfill
    \begin{subfigure}[b]{0.49\linewidth}
        \centering
        \includegraphics[width=\linewidth]{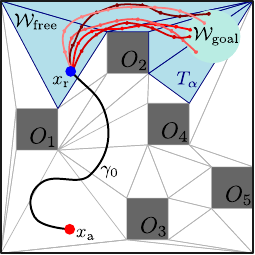}
        \caption{}
        \label{fig:h-constrained-traj-planning-b}
    \end{subfigure}
    \caption{Left: Candidate path $\alpha$ inside the base triangulation of $\mathcal{W}_\mathrm{free}$. Right: the polygonal set $T_\alpha$ used to constraint the state of the robot, defined as the union of a subset of the triangles in $\mathcal{T}_2$.}
    \label{fig:h-constrained-traj-planning}
\end{figure}

Second, we consider the dynamic model of the tether (see Section \ref{sec:fem-model}), which we denote by
\begin{equation}
    \label{eq:fem-hybrid}
    q_\mathrm{t}(k+1) = f_\mathrm{t}(q_\mathrm{t}(k), F(k)),
\end{equation}
with $F(k)$ representing the external forces acting on the nodes of the model, such as exogenous inputs (e.g., water current and gravity) and reaction forces due to the contact with the obstacle region $\mathcal{O}$.
The sequence of states of the tether is defined as
\begin{equation}
    \bm{q}_\mathrm{t} = [q^\top_\mathrm{t}(0), q^\top_\mathrm{t}(1), \ldots, q^\top_\mathrm{t}(N)]^\top,
\end{equation}
with the tether state $q_\mathrm{t}$ a spatially discretized representation of $\gamma$ at time step $k$ (see Section \ref{sec:fem-model}), defined as in \eqref{eq:tether-state}, and with $q_\mathrm{t}(0)$ the initial discretization of the tether at time 0, which we denote with $\hat{\gamma}_0$.
The tether dynamics are linked to the ones of the robot by a constraint imposing the coincidence between the location of the robot and the free endpoint of the tether. This effectively acts as a boundary condition for the finite-element model, with a second one being imposed by the anchor point, whose location is fixed.

Additionally, in order to satisfy the entanglement-free condition, we include a non-entanglement constraint with respect to the entanglement definition $\delta$ for all the configurations that the tether assumes during the motion of the robot. 
By optimizing over the control input $\bm{u}$, we can determine the trajectory of the robot, and in turn that of the tether.

The resulting trajectory generation problem is the following:
\begin{subequations}
	\label{eq:MINLP}
	\begin{align}
        J(&x_\mathrm{a}, \hat{\gamma}_0, q_{\mathrm{r}, 0}, T_\alpha) = \nonumber\\
        &\min_{\bm{u}, \bm{q}_\mathrm{r}, \bm{q}_\mathrm{t}} \lambda_\mathrm{u}\sum_{k=0}^{N-1} u^\top(k)Qu(k) \nonumber\\
        &\qquad\!\!+ \lambda_\mathrm{g} \sum_{k=1}^{N}\text{dist}\bigl(q_\mathrm{r}(k), \mathcal{W}_\mathrm{goal}\bigr)^2\label{eq:cost-function}\\
        &\text{s. t. }\; q_\mathrm{t}(0) = \hat{\gamma}_0,\label{eq:tether-1}\\
        &\qquad q_\mathrm{r}(0) = q_{\mathrm{r}, 0} \\
        &\qquad\text{for } k = 0 \ldots N-1 \nonumber\\
        &\qquad\qquad q_\mathrm{r}(k+1) = f_\mathrm{r}(q_\mathrm{r}(k), u(k))\\
        &\qquad\qquad q_\mathrm{t}(k+1) = f_\mathrm{t}(q_\mathrm{t}(k), F(k))\label{eq:tether-2}\\
        &\qquad\qquad q_\mathrm{t}^{(z)}(k+1) = q_\mathrm{r}(k+1),\label{eq:tether-3}\\
        &\qquad\qquad q_\mathrm{t}^{(0)}(k+1) = x_\mathrm{a},\label{eq:tether-4}\\
        &\qquad\qquad u(k) \in U\\
        &\qquad\qquad q_\mathrm{r}(k) \in X\\
        &\qquad q_\mathrm{r}(N) \in \mathcal{W}_\mathrm{goal} \\
        &\qquad\text{for } k = 1 \ldots N \nonumber\\
        &\qquad\qquad\delta\left(q_\mathrm{t}(k+1)\right)=0,\label{eq:tether-5}\\
        &\qquad\qquad\text{for } i=1, \ldots, |T_\alpha|\nonumber\\
        &\qquad\qquad\qquad A_i p(k) + b_i \leq M_i (1 - \Delta_i)\label{eq:h-constraint-1}\\
        &\qquad\qquad\qquad\sum_{i=1}^{|T_\alpha|} \Delta_i = 1\label{eq:h-constraint-2}
     \end{align}
\end{subequations}
Here $\lambda_\mathrm{u}, \lambda_\mathrm{g} \in \mathbb{R}_{> 0}$ are two tuning weights to balance the terms of the cost function, $Q$ is an appropriately-sized positive-definite matrix, and $\mathrm{dist}(x, \mathcal{W}_\mathrm{goal})$ is the distance between the point $x$ and the goal region. The distance can be computed either through a heuristic or as an exact geodesic in $\mathcal{W}_\mathrm{free}$.
Finally, $X$ and $U$ represents the state and input constraints, respectively, which are usually L1-norm or L2-norm constraints. 

\begin{remark}
    The computational cost of solving \eqref{eq:MINLP} highly depends on the dynamics of the tether \eqref{eq:tether-2}.
    It is worth noting that, under some classes of entanglement definitions such as those evaluating entanglement based solely on the homotopy class in which the tether lies (e.g., Definitions 3, 4, 5, 11 of \cite{battocletti2024entanglement}), the tether dynamics, and the non-entanglement constraint associated to it, can be effectively ignored.
    Under those definitions, the constraints \eqref{eq:tether-1}, \eqref{eq:tether-2}, \eqref{eq:tether-3}, \eqref{eq:tether-4}, and \eqref{eq:tether-5} can be dropped from \eqref{eq:MINLP}, significantly reducing the computational cost of solving \eqref{eq:MINLP}.
    Alternatively, the computational cost associated to the tether dynamics can be reduced by adopting a computationally efficient dynamic model of the tether, e.g., a learning-based one \cite{wang2022offline, yu2022global}, or a purely kinematic one \cite{martinez2023path}, in contrast to a lumped-mass or finite-element model. Again, we highlight that the proposed planning approach is flexible, and can accommodate the use of different dynamic models of the tether, thereby offering different solutions to alleviate the computational burden of solving \eqref{eq:MINLP}.

    We also note that, while the polygonal state constraints \eqref{eq:h-constraint-1}, \eqref{eq:h-constraint-2} are always present in the optimization problem because of the homotopy-class constraint, their impact on the computational cost of solving \eqref{eq:MINLP} can generally be limited through appropriate warm starting of the optimization variable $\bm{q}_\mathrm{r}$ based on the candidate path $\alpha$.
\end{remark}

\section{Numerical simulations}
\label{sec:case_study}

In this section, we showcase the proposed approach through a series of numerical simulations. We report the three sections of the proposed pipeline, namely, the modeling of the configuration space of the tethered robot, the path planning and disentanglement planning, and the trajectory optimization, in Sections \ref{subsec:results-s-c-model}, \ref{subsec:results-path-planning}, and \ref{subsec:results-trajectory-optimization}, respectively.

In order to demonstrate the flexibility of the proposed planning pipeline to different entanglement definitions, we consider here two entanglement definitions from \cite{battocletti2024entanglement}, namely, Definitions \ref{def:obstacle_free_conv_hull} and \ref{def:local_visibility_homotopy}.
In the following, when needed, we will indicate with a superscript $^{(\delta)}$, with $\delta\in\{\ref{def:obstacle_free_conv_hull}, \ref{def:local_visibility_homotopy}\}$, the corresponding definition being considered.

The simulations have been implemented in Python 3.11.
The simulations of Section \ref{subsec:results-s-c-model} have been run on a Linux server with 8 AMD EPYC 7252 (3.1 GHz) processors and 251 GB of RAM, while the simulations of Sections \ref{subsec:results-path-planning} and \ref{subsec:results-trajectory-optimization}, and of Appendix \ref{appendix:path-planning-timing}, have been run on a laptop with an 8th Gen Intel processor with six i7 cores (2.20 GHz) and 16 GB of RAM.
The source code is available at \href{https://github.com/gbattocletti/motion-planning-tethered-robots}{https://github.com/gbattocletti/motion-planning-tethered-robots}.

\subsection{Entanglement-free simplicial complex model}
\label{subsec:results-s-c-model}
We start by evaluating the construction of the simplicial complex model of the entanglement-free configuration space of a tethered robot, which is achieved through Algorithm \ref{alg:entanglement-free-simplicial-complex}.
The algorithm is evaluated on 6 environments with different numbers of obstacles $n\in\{1, 2, 5, 7, 12, 15\}$, with $m \leq n$ of them giving rise to multiple homotopy classes. The number of obstacles of each environment is listed in Table \ref{tab:environments-properties}. 
\begin{table}[!ht]
    \centering
    \small
    \caption{Properties of the evaluation scenarios.}
    \label{tab:environments-properties}
    \begin{tabular}{ccc}
        \toprule
        Environment & $n$ & $m$ \\ 
        \midrule
        1 &  1 &  1 \\
        2 &  2 &  2 \\
        3 &  5 &  4 \\
        4 &  7 &  6 \\
        5 & 12 &  8 \\
        6 & 15 & 10 \\
        \bottomrule
    \end{tabular}
\end{table}
All environments have size 10$\times$10 units; in each of them, the simplicial complex model is generated for three different values of $l$, namely, 10.0, 12.5, and 15.0 units of length, corresponding to 1, 1.25, and 1.5 times the environment size.
Environment 3 is depicted in Figure \ref{fig:triangulation-a}.
Two examples of the simplicial complex models constructed through Algorithm \ref{alg:entanglement-free-simplicial-complex} are shown in Figure \ref{fig:triangulation}. 
In particular, Figure \ref{fig:triangulation-c} shows the simplicial complex model $\widetilde{\mathcal{R}}$ generated from the environment in Figure \ref{fig:triangulation-a} with no entanglement constraint, while Figure \ref{fig:triangulation-d} shows the simplicial complex model $\widetilde{\mathcal{N}}^{(9)}$.

For each combination of $m$ and $l$, Algorithm \ref{alg:entanglement-free-simplicial-complex} is used to generate the length-reachable simplicial complex model $\widetilde{\mathcal{R}}$, and the entanglement-free simplicial complex models $\widetilde{\mathcal{N}}^{(6)}$ and $\widetilde{\mathcal{N}}^{(9)}$. 
As a baseline algorithm, we use the well-established approach from \cite{kim2014path}, which we use to compute the homotopy-augmented graph of the configuration space of a tethered robot, denoted as $\mathcal{H} = (\mathcal{V}, \mathcal{E})$. 
Moreover, we consider a modified version of the algorithm that, for each node $v=(x, h)$, checks if the homotopy class $h$ admits an entanglement-free tether configuration joining $x_\mathrm{a}$ and $x$. This is achieved by evaluating an entanglement definition $\delta$ on the shortest path between $x_\mathrm{a}$ and $x$ in $h$ (similarly to the procedure used to evaluate the vertices of the simplices in Section \ref{sec:simplicial-complex-model}), and results in an entanglement-free homotopy-augmented graph $\mathcal{H}^{(\delta)}$. 
The homotopy-augmented graph is computed for two different grid resolutions, namely, 0.5 and 0.25, which we indicate as a subscript of $\mathcal{H}$.

\begin{table*}[!htpb]
    \centering
    \setlength{\tabcolsep}{1.5pt}
    \setlength{\aboverulesep}{0pt}
    \setlength{\belowrulesep}{0pt}
    \caption{Comparison of simplicial complex models and homotopy-augmented graph model under different entanglement definitions.}
    \label{tab:comparison}
    \begin{tabular}{ccrrrrrrrrrrrrrrrrrr}
\toprule
\vspace{-0.25cm}
& & & & & & & & & & & & & & & & & & & \\
& & \multicolumn{2}{c}{$\widetilde{\mathcal{R}}$} & \multicolumn{2}{c}{$\widetilde{\mathcal{N}}^{(6)}$} & \multicolumn{2}{c}{$\widetilde{\mathcal{N}}^{(9)}$} & \multicolumn{2}{c}{$\mathcal{H}_{0.5}$} & \multicolumn{2}{c}{$\mathcal{H}^{(6)}_{0.5}$} & \multicolumn{2}{c}{$\mathcal{H}^{(9)}_{0.5}$} & \multicolumn{2}{c}{$\mathcal{H}_{0.25}$} & \multicolumn{2}{c}{$\mathcal{H}^{(6)}_{0.25}$} & \multicolumn{2}{c}{$\mathcal{H}^{(9)}_{0.25}$}\\
Env & $l$ & $\widetilde{\mathcal{R}}_2$ & $t$ [s] & $\widetilde{\mathcal{N}}_2$ & $t$ [s] & $\widetilde{\mathcal{N}}_2$ & $t$ [s] & $\mathcal{V}$ & $t$ [s] & $\mathcal{V}$ & $t$ [s] & $\mathcal{V}$ & $t$ [s] & $\mathcal{V}$ & $t$ [s] & $\mathcal{V}$ & $t$ [s] & $\mathcal{V}$ & $t$ [s]\\
\vspace{-0.25cm}
& & & & & & & & & & & & & & & & & & & \\
\midrule
1 & 10.0 &    4 &  0.01 &  4 &  0.01 &   4 &  0.01 &   493 &   0.35 & 320 &   0.52 & 449 &   1.98 &   1942 &    2.10 & 1203 &    3.18 & 1758 &    25.03 \\
1 & 12.5 &   10 &  0.01 &  4 &  0.01 &   8 &  0.01 &   722 &   0.52 & 320 &   0.83 & 506 &   2.95 &   2807 &    3.24 & 1203 &    5.19 & 1945 &    34.22 \\
1 & 15.0 &   14 &  0.01 &  4 &  0.01 &   8 &  0.01 &   990 &   0.74 & 320 &   1.25 & 512 &   3.55 &   3845 &    4.82 & 1203 &    8.13 & 1955 &    38.05 \\
2 & 10.0 &   18 &  0.01 &  6 &  0.01 &  14 &  0.01 &   679 &   0.57 & 317 &   0.85 & 525 &   4.65 &   2655 &    3.41 & 1187 &    5.03 & 1961 &    54.83 \\
2 & 12.5 &   18 &  0.01 &  6 &  0.02 &  14 &  0.02 &  1080 &   1.02 & 317 &   1.53 & 525 &   5.66 &   4327 &    6.56 & 1187 &    9.85 & 1961 &    61.85 \\
2 & 15.0 &   42 &  0.04 &  6 &  0.05 &  14 &  0.06 &  1724 &   1.71 & 317 &   2.70 & 525 &   8.10 &   6777 &   11.75 & 1187 &   18.17 & 1961 &    81.69 \\
3 & 10.0 &   69 &  0.08 & 14 &  0.10 &  28 &  0.13 &  1211 &   1.28 & 253 &   1.89 & 577 &  10.31 &   4660 &    7.36 &  932 &   10.83 & 2167 &   105.61 \\
3 & 12.5 &  143 &  0.19 & 14 &  0.23 &  28 &  0.31 &  2340 &   2.80 & 253 &   4.16 & 585 &  15.71 &   9014 &   18.31 &  932 &   26.45 & 2207 &   150.51 \\
3 & 15.0 &  269 &  0.40 & 14 &  0.48 &  28 &  0.71 &  4385 &   6.12 & 253 &   9.09 & 598 &  29.73 &  16884 &   49.33 &  932 &   66.17 & 2271 &   275.30 \\
4 & 10.0 &  157 &  0.24 & 14 &  0.28 &  36 &  0.39 &  2240 &   2.97 & 232 &   4.16 & 586 &  13.51 &   8935 &   20.02 &  851 &   26.69 & 2226 &   119.11 \\
4 & 12.5 &  398 &  0.70 & 14 &  0.80 &  36 &  1.21 &  5443 &   9.14 & 232 &  12.65 & 592 &  32.26 &  21743 &   78.87 &  851 &   98.08 & 2258 &   287.45 \\
4 & 15.0 &  995 &  2.05 & 14 &  2.30 &  36 &  3.60 & 13178 &  33.11 & 232 &  43.86 & 596 &  93.40 &  53389 &  747.40 &  851 &  757.32 & 2283 &  1251.06 \\
5 & 10.0 &  449 &  0.81 & 10 &  0.97 &  96 &  1.57 &  2288 &   3.38 &  76 &   4.80 & 665 &  45.31 &   8881 &   20.34 &  263 &   28.94 & 2264 &   625.08 \\
5 & 12.5 & 1314 &  2.84 & 10 &  3.30 & 119 &  5.79 &  6548 &  12.48 &  76 &  17.32 & 719 & 122.50 &  25362 &  112.49 &  263 &  135.78 & 2943 &  1640.19 \\
5 & 15.0 & 4034 & 10.80 & 10 & 12.20 & 140 & 21.70 & 18176 &  57.96 &  76 &  75.67 & 748 & 348.50 &  70953 & 1666.67 &  263 & 1640.04 & 3125 &  5370.24 \\
6 & 10.0 &  637 &  1.30 & 26 &  1.51 &  79 &  2.42 &  3471 &   6.26 & 157 &   8.43 & 542 &  43.64 &  13135 &   38.04 &  541 &   51.03 & 2035 &   412.71 \\
6 & 12.5 & 2449 &  6.10 & 26 &  6.93 &  85 & 12.00 & 11216 &  29.30 & 157 &  38.11 & 583 & 128.80 &  42704 &  390.95 &  541 &  425.92 & 2195 &  1343.98 \\
6 & 15.0 & 9004 & 29.92 & 26 & 33.40 &  88 & 57.18 & 36016 & 217.76 & 157 & 270.27 & 587 & 540.98 & 139238 & 7660.32 &  541 & 8019.35 & 2211 & 10479.63 \\%
\bottomrule
\end{tabular}
\end{table*}

An extensive testing of the algorithm is reported in Table \ref{tab:comparison}.
Table \ref{tab:comparison} reports the computation time required to build each model, and the memory occupancy, expressed as number of simplices for $\widetilde{\mathcal{R}}$ and $\widetilde{\mathcal{N}}^{(\delta)}$ (indicated as the cardinality of the set of 2-simplices), and number of nodes for $\mathcal{H}$ and $\mathcal{H}^{(\delta)}$.
The results in Table \ref{tab:comparison} highlight the benefits of the proposed approach over the baseline algorithm from \cite{kim2014path}. In particular, the proposed model can be computed in a fraction of the time required by the homotopy-augmented graph $\mathcal{H}$, which, even with a coarse resolution equal to 0.5, requires a long computation time, incompatible with real-time planning, to build. 
Furthermore, the number of simplices composing the simplicial complex model is significantly smaller than the number of nodes in $\mathcal{H}$. This results in a smaller memory footprint of the simplicial complex model, despite it being a continuous model of the robots' configuration space, and in a smaller size of the graph that path planning algorithms need to search on.
A direct consequence of the larger number of nodes composing $\mathcal{H}$ can also be found in the higher average computation time required to solve path planning queries on $\mathcal{H}$ with respect to that required to solve the same problem on $\widetilde{\mathcal{R}}$, as discussed in Appendix \ref{appendix:path-planning-timing}.

Additionally, Table \ref{tab:comparison} highlights how the introduction of non-entanglement constraints during the construction of the simplicial complex model $\widetilde{\mathcal{N}}^{(\delta)}$ and of the graph $\mathcal{H}^{(\delta)}$ result in a reduction in the size of the feasible configuration space with respect to the unconstrained case. 
Table \ref{tab:percentage-areas} reports the numerical values of the area restriction, expressed as percentage of area covered by $\widetilde{\mathcal{N}}^{(\delta)}$ (i.e., the underlying space $\mathcal{U}(\widetilde{\mathcal{N}}^{(\delta)})$) with respect to that covered by $\widetilde{\mathcal{R}}$. 
This value has been evaluated, in addition to Definitions \ref{def:obstacle_free_conv_hull} and \ref{def:local_visibility_homotopy}, also for Definitions 1, 2, 7, and 11 of \cite{battocletti2024entanglement}.\footnote{For the other definitions of \cite{battocletti2024entanglement} this evaluation is not applicable, as they either require an obstacle-free multi-robot setting, or require a closed tether configuration, or depend on tunable parameters that can make the result be any desired value between 0\% and 100\%.} The results indicate that Definitions 1, 2, and 7 have the same area coverage as Definition \ref{def:obstacle_free_conv_hull} (3rd column of Table \ref{tab:percentage-areas}), while Definition 11 matches that of Definition \ref{def:local_visibility_homotopy} (4th column of Table \ref{tab:percentage-areas}).
\begin{table}[!ht]
    \centering
    \small
    \setlength{\tabcolsep}{4.5pt}
    \caption{\vspace{0.1cm}Ratio between $\mathcal{U}(\widetilde{\mathcal{N}}^{(\delta)})$ and $\mathcal{U}(\widetilde{\mathcal{R}})$.}
    \label{tab:percentage-areas}
    \begin{tabular}{ccrr}
        \toprule
        Env & $l$ & $\frac{\mathcal{U}(\tilde{\mathcal{N}}^{(1, 6, 7)})}{\mathcal{U}(\tilde{\mathcal{R}})}$\% & $\frac{\mathcal{U}(\tilde{\mathcal{N}}^{(9, 11)})}{\mathcal{U}(\tilde{\mathcal{R}})}$\%  \\ 
        \midrule
        1 & 10 & 100.00 & 100.00 \\
        1 & 15 &  36.73 &  65.31 \\
        2 & 10 &  44.00 &  92.00 \\
        2 & 15 &  17.46 &  36.51 \\
        3 & 10 &  24.76 &  51.21 \\
        3 & 15 &   6.00 &  12.41 \\
        4 & 10 &  10.92 &  24.63 \\
        4 & 15 &   4.33 &   9.77 \\
        5 & 10 &   1.95 &  26.67 \\
        5 & 15 &   0.69 &  12.14 \\
        6 & 10 &   4.38 &  13.69 \\
        6 & 15 &   0.31 &   1.14 \\
        \bottomrule
    \end{tabular}
\end{table}

\subsection{Path planning and disentangling path planning}
\label{subsec:results-path-planning}
Once the simplicial complex model $\widetilde{\mathcal{N}}^{(\delta)}$ of the entan\-gle\-ment-free configuration space is computed, it can be used for path planning purposes. As mentioned in Section \ref{sec:path_planning}, the simplicial complex $\widetilde{\mathcal{N}}^{(\delta)}$ yields a primal graph $\widetilde{\mathcal{G}}_{\tilde{\mathcal{N}}}$ and a dual graph $\widetilde{\mathcal{G}}_{\tilde{\mathcal{N}}}'$, both suitable for path planning purposes. In Figure \ref{fig:path-planning} we show two examples of path planning using the primal graph, which results in the computation of the shortest path in each entanglement-admissible homotopy class.
In the figure, we consider two different scenarios, and we compare path planning performed on $\widetilde{\mathcal{R}}$ (Figures \ref{fig:path-planning-a} and \ref{fig:path-planning-c}) and on $\widetilde{\mathcal{N}}^{(9)}$ (Figures \ref{fig:path-planning-b} and \ref{fig:path-planning-d}). 
\begin{figure*}
    \centering
    \begin{subfigure}[b]{0.24\linewidth}
        \centering
        \includegraphics[width=\linewidth]{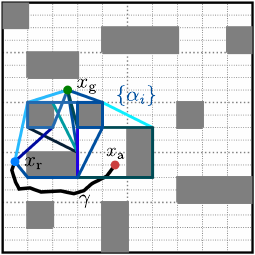}
        \caption{}
        \label{fig:path-planning-a}
    \end{subfigure}
    \hfill
    \begin{subfigure}[b]{0.24\linewidth}
        \centering
        \includegraphics[width=\linewidth]{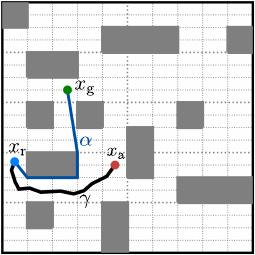}
        \caption{}
        \label{fig:path-planning-b}
    \end{subfigure}
    \hfill
    \begin{subfigure}[b]{0.24\linewidth}
        \centering
        \includegraphics[width=\linewidth]{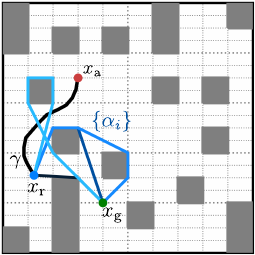}
        \caption{}
        \label{fig:path-planning-c}
    \end{subfigure}
    \hfill
    \begin{subfigure}[b]{0.24\linewidth}
        \centering
        \includegraphics[width=\linewidth]{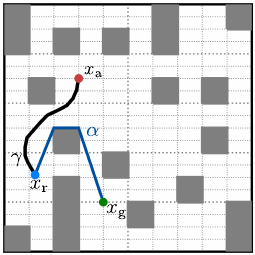}
        \caption{}
        \label{fig:path-planning-d}
    \end{subfigure}
    \caption{Example of path planning on the simplicial complex model. Two scenarios are considered. The first is shown in (a) and (b), the second in (c) and (d). The plots in (a) and (c) depict the paths found by running the path planning algorithm on $\widetilde{\mathcal{R}}$. The plots in (b) and (d) show the unique entanglement-free path found on $\widetilde{\mathcal{N}}^{(9)}$.}
    \label{fig:path-planning}
\end{figure*}
In the case of $\widetilde{\mathcal{R}}$, the path planning algorithm returns a set $\{\alpha_i\}$ containing several paths. However, the unique path found on $\widetilde{\mathcal{N}}^{(9)}$ indicates that all but one of those paths lead to an entangled tether configuration.
An example of this is shown in Figure \ref{fig:path-planning-resulting-tether}, where the tether configuration resulting from selecting the shortest path from those depicted in Figure \ref{fig:path-planning-a} is compared with that resulting from the entanglement-free path shown in Figure \ref{fig:path-planning-b}.
In particular, we note that the tether configuration in Figure \ref{fig:path-planning-resulting-tether-a} is entangled with respect to Definition \ref{def:local_visibility_homotopy}, while the one in Figure \ref{fig:path-planning-resulting-tether-b} is not.
\begin{figure}
    \centering
    \begin{subfigure}[b]{0.49\linewidth}
        \centering
        \includegraphics[width=\linewidth]{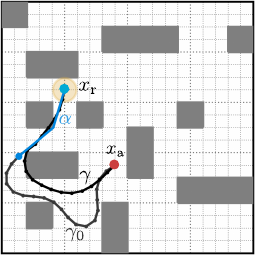}
        \caption{}
        \label{fig:path-planning-resulting-tether-a}
    \end{subfigure}
    \hfill
    \begin{subfigure}[b]{0.49\linewidth}
        \centering
        \includegraphics[width=\linewidth]{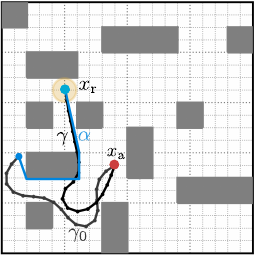}
        \caption{}
        \label{fig:path-planning-resulting-tether-b}
    \end{subfigure}
    \caption{Homotopy class of the tether after the motion of the robot along the shortest path computed on $\widetilde{\mathcal{R}}$ and on $\widetilde{\mathcal{N}}^{(9)}$ from the initial condition depicted in Figures \ref{fig:path-planning-a} and \ref{fig:path-planning-b}. The tether in (a) is entangled with respect to Definition \ref{def:local_visibility_homotopy}, as its configuration does not belong to $\widetilde{\mathcal{N}}^{(9)}$, while the one in (b) is not entangled.}
    \label{fig:path-planning-resulting-tether}
\end{figure}

The existence of multiple solutions to the path planning problem is not unique to $\widetilde{\mathcal{R}}$. In fact, depending on the entanglement definition $\delta$ and on the environment shape and goal location, multiple entanglement-free paths can exist, as shown in Figure \ref{fig:path-planning-multiplicity}, where multiple entanglement-free solutions are computed. The proposed path planning approach is capable of efficiently enumerating all the possible entanglement-free paths, with path planning queries on $\widetilde{\mathcal{N}}^{(9)}$ requiring on average 0.001s to be solved. The different paths can then be evaluated and compared to select the one to follow. 
\begin{figure}
    \centering
    \begin{subfigure}[b]{0.49\linewidth}
        \centering
        \includegraphics[width=\linewidth]{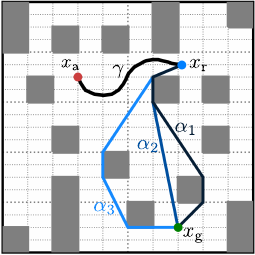}
        \caption{}
        \label{fig:path-planning-multiplicity-a}
    \end{subfigure}
    \hfill
    \begin{subfigure}[b]{0.49\linewidth}
        \centering
        \includegraphics[width=\linewidth]{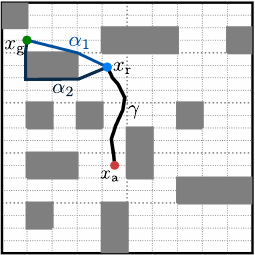}
        \caption{}
        \label{fig:path-planning-multiplicity-b}
    \end{subfigure}
    \caption{Examples of multiple entanglement-free paths computed on $\widetilde{\mathcal{N}}^{(9)}$.}
    \label{fig:path-planning-multiplicity}
\end{figure}

Lastly, we consider the disentanglement problem introduced in Problem \ref{prob:disentanglement-pp}. To this end, we perform several experiments where we select an initially entangled tether configuration, and we perform path planning on the length-reachable simplicial complex model $\widetilde{\mathcal{R}}$ until a node belonging to $\widetilde{\mathcal{N}}^{(\delta)}$ is reached, corresponding to a robot location for which the tether is not entangled. Examples of disentanglement planning are shown in Figure \ref{fig:disentanglement-planning}.
The four disentanglement scenarios depicted in Figure \ref{fig:disentanglement-planning} show how the proposed approach can be effectively used to compute disentangling paths under different entanglement definitions, a feature not available in previous path planning algorithms for tethered robots.
This capability of the proposed algorithm can enhance the safety of tethered robot systems, allowing to recover the motion capabilities of a tethered robot after it has reached an entangled tether configuration.
\begin{figure*}
    \centering
    \begin{subfigure}[b]{0.24\linewidth}
        \centering
        \includegraphics[width=\linewidth]{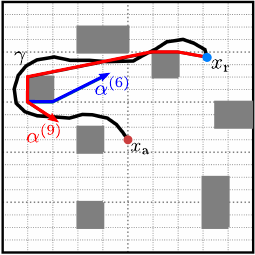}
        \caption{}
        \label{fig:disentanglement-planning-a}
    \end{subfigure}
    \hfill
    \begin{subfigure}[b]{0.24\linewidth}
        \centering
        \includegraphics[width=\linewidth]{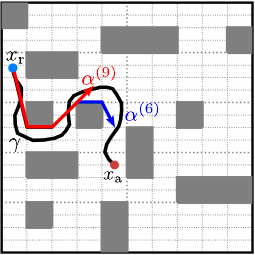}
        \caption{}
        \label{fig:disentanglement-planning-b}
    \end{subfigure}
    \begin{subfigure}[b]{0.24\linewidth}
        \centering
        \includegraphics[width=\linewidth]{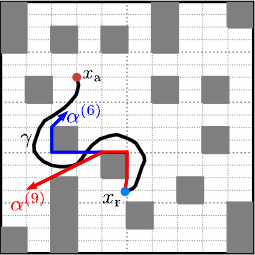}
        \caption{}
        \label{fig:disentanglement-planning-c}
    \end{subfigure}
    \hfill
    \begin{subfigure}[b]{0.24\linewidth}
        \centering
        \includegraphics[width=\linewidth]{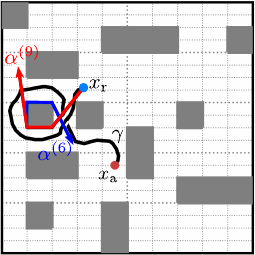}
        \caption{}
        \label{fig:disentanglement-planning-d}
    \end{subfigure}
    \caption{Example of disentanglement planning on the simplicial complex model. Four different disentanglement scenarios are depicted; in each of them, the tether is initially entangled with respect to Definitions \ref{def:obstacle_free_conv_hull} and \ref{def:local_visibility_homotopy}. For each scenario, a path going from $\widetilde{\mathcal{R}}$ to $\widetilde{\mathcal{N}}^{(\delta)}$ is shown. The \textcolor{blue}{blue} line represents the disentangling path for $\delta=6$, the \textcolor{red}{red} line the one for $\delta=$9.}
    \label{fig:disentanglement-planning}
\end{figure*}

\subsection{Entanglement-free trajectory generation}
\label{subsec:results-trajectory-optimization}

In the trajectory optimization phase, we model the robot as a double integrator, corresponding to an omnidirectional robot, where the control input is a vector $u\in\mathbb{R}^2$ representing the actuation forces in the $x$ and $y$ directions.
For space reasons, in the trajectory planning experiments we consider only one entanglement definition, namely, Definition \ref{def:local_visibility_homotopy}.
For the tether dynamics, we consider a lumped-mass model based on \cite{buckham1999dynamics}, with a collision resolution method adapted from \cite{roy2009continuous}. We remark again that the proposed planning framework is flexible, and can be used with different dynamic models of the robot and of the tether, e.g., those proposed in \cite{buckham2004development, driscoll2000development,martinez2023path}.
The values of the parameters of \eqref{eq:MINLP} used in the simulations are listed in Table \ref{tab:minlp-parameters}.
\begin{table}[!ht]
    \centering
    \small
    \caption{Parameters of \eqref{eq:MINLP} and their values used for the trajectory planning experiments.}
    \label{tab:minlp-parameters}
    \begin{tabular}{cc}
        \toprule
        Parameter & value \\ 
        \midrule
        $N$ & 30 \\
        $\Delta t$ & 0.5s \\
        $\lambda_\mathrm{g}$ & 10.0 \\
        $\lambda_\mathrm{u}$ & 2.0 \\
        $Q$ & $\mathrm{diag}(1, 1)$ \\
        $X$ & $[0, 10]\times[-2.0, 2.0]\times[0, 10]\times[-2.0, 2.0]$\\
        $U$ & $[-1, 1] \times [-1, 1]$ \\
        \bottomrule
    \end{tabular}
\end{table}

We evaluate the proposed homotopy-constrained trajectory generation approach in different planning scenarios. To this end, we run the full motion planning pipeline, up to the execution of the planned trajectories. Figure \ref{fig:fem} shows the resulting trajectory and the final tether configuration in four different scenarios. In all of them, the proposed planning pipeline is able to compute a dynamically feasible entanglement-free trajectory to reach the goal region, demonstrating the effectiveness of the proposed approach to solve Problem \ref{prob:entanglement-free-planning}. 
Furthermore, Figure \ref{fig:fem-d} shows a disentanglement planning scenario, confirming how the proposed pipeline is suitable also to solve Problem \ref{prob:disentanglement-pp}. 

\begin{figure*}
    \centering
    \begin{subfigure}[b]{0.24\linewidth}
        \centering
        \includegraphics[width=\linewidth]{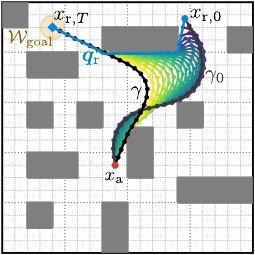}
        \caption{}
        \label{fig:fem-a}
    \end{subfigure}
    \hfill
    \begin{subfigure}[b]{0.24\linewidth}
        \centering
        \includegraphics[width=\linewidth]{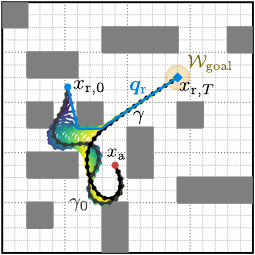}
        \caption{}
        \label{fig:fem-b}
    \end{subfigure}
    \hfill
    \begin{subfigure}[b]{0.24\linewidth}
        \centering
        \includegraphics[width=\linewidth]{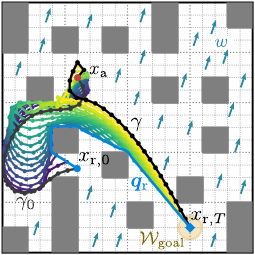}
        \caption{}
        \label{fig:fem-c}
    \end{subfigure}
    \hfill
    \begin{subfigure}[b]{0.24\linewidth}
        \centering
        \includegraphics[width=\linewidth]{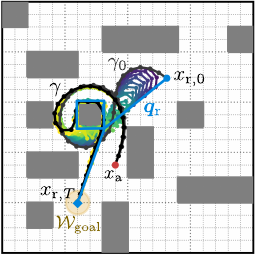}
        \caption{}
        \label{fig:fem-d}
    \end{subfigure}
    \caption{Examples of trajectory planning and execution in different scenarios. In each scenario, the robot is initially located at $x_{\mathrm{r}, 0}$ with tether configuration $\gamma_0$. A dynamically feasible trajectory $\bm{q}_\mathrm{r}$ to reach the goal region $\mathcal{W}_\mathrm{goal}$ is computed by solving \eqref{eq:MINLP}. The final robot location $x_{\mathrm{r}, T}$ and the final tether configuration $\gamma$ after a time interval $T=$ 20s are shown. The colored lines represent the intermediate tether configurations achieved during the motion of the robot, with a tether configuration being displayed for each 1s of simulation time.
    The color shade goes from blue to yellow as time progresses.
    In (c) the medium where the robot and the tether move is water, with a current $w$ acting on the tether in the direction of the blue arrows. Figure (d) shows a disentanglement planning scenario.}
    \label{fig:fem}
\end{figure*}
\begin{figure*}
    \centering
    \includegraphics[width=\textwidth]{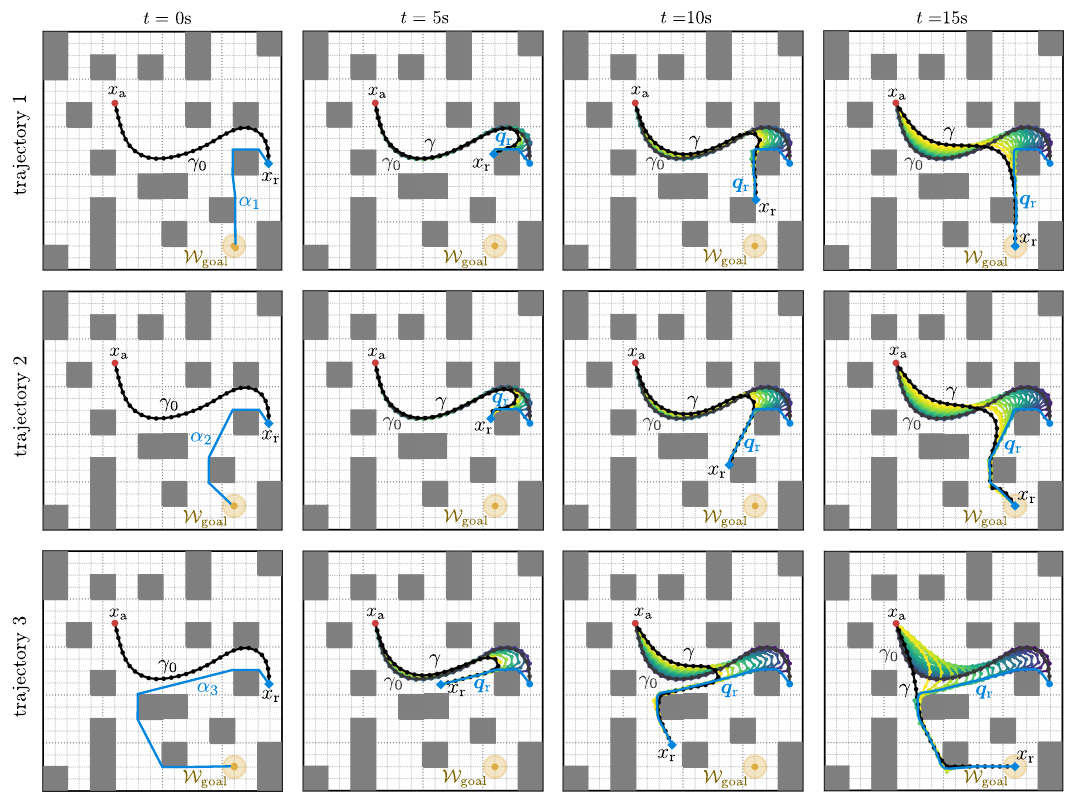}
    \caption{Comparison of three trajectories in different homotopy classes computed between the same initial and goal locations for $t=$ 0s, 5s, 10s, and 15s. The three paths $\alpha_1$, $\alpha_2$, $\alpha_3$ are similar to those computed in Figure \ref{fig:path-planning-multiplicity-a}, with slightly different initial conditions.}
    \label{fig:fem-comparison}
\end{figure*}

In the simulations, we note that solving the mixed-integer nonlinear problem (MINLP) \eqref{eq:MINLP} directly has a very high computational cost, with the solution time likely to be impractical for online planning.
To address this issue, we propose to break the trajectory optimization into two steps.
First, we relax the problem \eqref{eq:MINLP} by removing the tether dynamics by dropping the constraints \eqref{eq:tether-1}, \eqref{eq:tether-2}, \eqref{eq:tether-3}, \eqref{eq:tether-4}, and \eqref{eq:tether-5}.
The resulting optimization problem is a mixed-integer quadratic program (MIQP), which we solve to compute a dynamically feasible robot trajectory in the same homotopy class as the candidate path $\alpha$.
Then, we compute the trajectory of the tether by plugging into its dynamics \eqref{eq:fem-hybrid} the robot trajectory obtained from the MIQP relaxation, in order to verify that the trajectory is entanglement-free.
If the entanglement-free constraint is satisfied everywhere along the trajectory, the solution obtained from the MIQP relaxation of \eqref{eq:MINLP} is executed. Otherwise, the full MINLP problem \eqref{eq:MINLP} is solved, with the MIQP solution serving as an initial guess.
While this approach cannot be guaranteed to be effective for all possible tether dynamics $f_\mathrm{t}$ and for all possible entanglement definitions, in the simulations we consistently achieve entanglement-free trajectories already in the first step, as the entanglement constraint is never violated when removing the dynamics \eqref{eq:tether-2} and the other constraints associated to it.
We solve the relaxed MIQP problem with Gurobi \cite{gurobi}, achieving a solution time for the trajectory generation problem between 0.35s and 1.71s, with an average of 0.72s $\pm$ 0.46s, which is compatible with the time constraint usually associated to global motion planning problems.

In addition to the scenarios depicted in Figure \ref{fig:fem}, we showcase the parallel trajectory optimization capabilities of the proposed planning pipeline. As mentioned, path planning on $\widetilde{\mathcal{N}}^{(9)}$ returns, in general, multiple entanglement-free paths. Each of these paths can be used to generate a different homotopy-class constraint and to compute a different entanglement-free trajectory between the robot location and the goal region $\mathcal{W}_\mathrm{free}$. 
Under the proposed planning framework, the trajectories can be computed in parallel, after which they can be evaluated and compared, for example through their cost $J$, to select which one to execute.
An example of this approach is shown in Figure \ref{fig:fem-comparison}, where three different trajectories between the same initial and final robot location are shown. Each trajectory lies in a different homotopy class, and maintains the tether in an entanglement-free state with respect to Definition \ref{def:local_visibility_homotopy}.
In the example of Figure \ref{fig:fem-comparison}, the value of the cost \eqref{eq:cost-function} for the three trajectories is 5.50, 7.89, and 11.08, respectively, which leads to the selection and execution of trajectory 1.
This example demonstrates the complete functioning of the proposed planning pipeline, and its effectiveness in computing and evaluating multiple homotopically-distinct trajectories under the length and entanglement constraints posed by the tether.

\section{Conclusions}
\label{sec:conclusions}
We have introduced a novel pipeline to achieve en\-tan\-gle\-ment-free motion planning for tethered mobile robots with a slack tether.
The proposed approach consists of a modeling phase, where a computationally efficient simplicial complex model of the entanglement-free workspace is generated, followed by a path planning step and by a trajectory planning phase that return entanglement-free dynamically feasible trajectories for tethered robots with a slack tether model.
The proposed pipeline is suitable also for disentanglement planning, and is flexible with respect to the application domain, tether model, and robot dynamics.
We have demonstrated the effectiveness of the proposed pipeline in solving motion planning problems for tethered mobile robots, and the superior efficiency, in terms of computational cost and memory footprint, of the proposed approach with respect to existing methods.

Future work will focus on further improving the computational efficiency of the trajectory optimization phase, as well as on implementing and evaluating the proposed approach in 3D environments. 
In addition, the proposed planning scheme will be evaluated in real-world experiments to assess its robustness against exogenous inputs and mismatches in the dynamic models.

\printcredits  

\section*{Declaration of competing interest}
The authors declare that they have no known competing financial
interests or personal relationships that could have appeared to influence
the work reported in this paper.

\section*{Acknowledgements}
This publication has been supported by funding from the European Union's Horizon Europe Programme under grant agreement No 101093822 (SeaClear 2.0).

\section*{Data Availability}
The code used to generate the results presented in this work is available at \href{https://github.com/gbattocletti/motion-planning-tethered-robots}{https://github.com/gbattocletti/motion-planning-tethered-robots}.

\bibliographystyle{cas-model2-names}  
\bibliography{references}

\appendix
\section{Conservativeness reduction of the simplicial complex model}
\label{appendix:reduction-of-conservativeness}
As already highlighted in \cite{battocletti2025efficient}, the strategy used in Algorithm \ref{alg:entanglement-free-simplicial-complex} to generate the simplicial complex model $\widetilde{\mathcal{R}}$ can result in a conservative approximation of the real entanglement-free configuration space. 
This is a drawback of the fact that only the vertices of the triangles of the base triangulation $\mathcal{T}$ are checked. 
This means that, even if part of a triangle satisfies both the length constraint and the non-entanglement constraint, it will not be added to $\widetilde{\mathcal{R}}$ if one of its vertices do not satisfy those constraints.
We propose here an approach to reduce this conservativeness, which is based on the addition of new triangles, different from those in $\mathcal{T}$, at the boundaries of the simplicial complex model returned by Algorithm \ref{alg:entanglement-free-simplicial-complex}.
The triangles are designed to cover an additional part of the entanglement-free free workspace. In order to do so, if a vertex of a triangle being added to the simplicial complex violates one of the constraints, we search for a new vertex that satisfies them in the interior of the triangle.
The resulting triangle is added to the simplicial complex, expanding the area it covers and better approximating $\widebar{\mathcal{R}}$ and $\widebar{\mathcal{N}}$.
This operation can be repeated multiple times to reduce the approximation error of the simplicial complex model, and can be applied both to $\widetilde{\mathcal{R}}$, in which case we consider the tether length constraint, and to $\widetilde{\mathcal{N}}$, where we consider also the entanglement constraint.
Without loss of generality, in Example \ref{example:conservativeness} we illustrate the proposed approach for the case of $\widetilde{\mathcal{R}}$.

\begin{example}
    \label{example:conservativeness}
    Consider a scenario such as the one depicted in Figure \ref{fig:conservativeness-reduction}, where the triangle $\widehat{ABC}$ has two vertices that satisfy the tether length condition ($A$ and $B$, marked in green), and one that does not satisfy it ($C$, marked in red). We aim to add a triangle having the line segment $\widebar{AB}$ as base that is as large as possible. In order to do so, we find the middle point $M$ on $\widebar{AB}$, and we look for the third vertex of the triangle on the line segment $\widebar{MC}$. The location of the point $C'$ is found via binary search. The simplices corresponding to $C'$, to the edges $\widebar{AC'}$ and $\widebar{BC'}$, and to the triangle $\widehat{ABC'}$, are then added to the simplicial complex $\widetilde{\mathcal{R}}$, as shown in Figure \ref{fig:conservativeness-reduction-a}.
    The process can be further iterated to add more triangles, as shown in Figure \ref{fig:conservativeness-reduction-b}, progressively reducing the conservativeness of the simplicial complex model.
\end{example}

\begin{figure}
    \centering
    \begin{subfigure}[b]{0.49\linewidth}
        \centering
        \includegraphics[width=\linewidth]{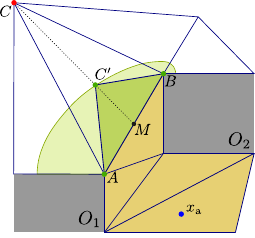}
        \caption{}
        \label{fig:conservativeness-reduction-a}
    \end{subfigure}
    \hfill
    \begin{subfigure}[b]{0.49\linewidth}
        \centering
        \includegraphics[width=\linewidth]{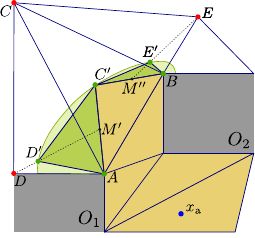}
        \caption{}
        \label{fig:conservativeness-reduction-b}
    \end{subfigure}
    \caption{Example of reduction of conservativeness of the simplicial complex model through the addition to $\widetilde{\mathcal{R}}$ and $\widetilde{\mathcal{N}}$ of triangles not originally contained in the triangulation of $\mathcal{T}$. The green shaded area represents the true reachable area under the tether length constraint, which is approximated through the triangles $\widehat{ABC'}$, $\widehat{AC'D'}$, and $\widehat{AC'E'}$.}
    \label{fig:conservativeness-reduction}
\end{figure}

\begin{remark}
    More advanced triangulation methods can also be used, such as the one proposed in \cite{gosselin2011constructing}, but at the cost of a larger increase in the number of triangles added to the simplicial complex.
    Clearly, there is a balance to be found between the computational effort of this operation, which is directly linked to the number of triangles added to the simplicial complex, and the achieved conservativeness reduction. 
    In the numerical simulations, we have found that a single iteration of the conservativeness reduction process (corresponding to the addition of one triangle at each boundary of $\widetilde{\mathcal{R}}$, as shown in Figure \ref{fig:conservativeness-reduction-a}), already leads to a significant reduction in conservativeness and has a moderate computational cost.
\end{remark}

The proposed approach to reduce the conservativeness of the simplicial complex model has been applied to the evaluation scenarios introduced in Section \ref{sec:case_study}.
Table \ref{tab:conservativeness-reduction} reports the area of the length-reachable configuration space covered by the simplicial complex model $\widetilde{\mathcal{R}}$ and that covered by the simplicial complex model $\widetilde{\mathcal{R}}'$ obtained after the conservativeness reduction step. As a baseline, we report the area covered by the homotopy-augmented grid graph $\mathcal{H}$, both for grid resolution 0.5 and 0.25. 
The table shows the benefits of the proposed conservativeness reduction procedure, significantly reducing the gap between the area of $\mathcal{U}(\widetilde{\mathcal{R}})$ and that covered by $\mathcal{H}$.
The data is also visualized in Figure \ref{fig:conservativeness-reduction-comparison}, where the data for $l=12.5$ is shown in addition to that for $l=10.0$ and $l=15.0$. 
The plot highlights how the conservativeness reduction routine yields an almost constant area increase in $\widetilde{\mathcal{R}'}$ with respect to $\widetilde{\mathcal{R}}$.
We note that the simplicial complex model, in environments with many obstacles (Env 5 and 6), covers an area larger than that of covered by $\mathcal{H}$. This is mainly due to the use of the Manhattan distance as a heuristic during the construction of $\mathcal{H}$, in order to avoid having to run a (relatively) computationally expensive curve-shortening routine for every node to compute the Euclidean distance (for more details on this strategy see \cite{kim2014path}).\footnote{While this conservativeness could be removed by computing the Euclidean distance in place of the Manhattan one, this would significantly increase the already high computational cost required to build $\mathcal{H}$.}
This results in the homotopy-augmented graph $\mathcal{H}$ being more conservative than $\mathcal{U}(\widetilde{\mathcal{R}}')$, and in some instances even than $\mathcal{U}(\widetilde{\mathcal{R}})$. 
\begin{figure}
    \centering
    \includegraphics[width=\linewidth]{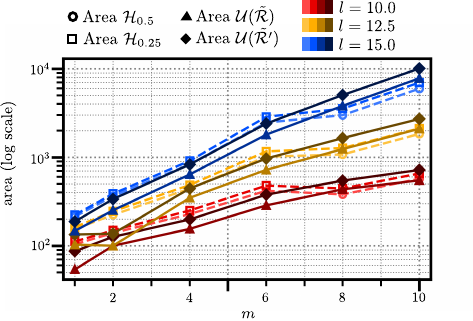}
    \caption{Comparison of the data from Table \ref{tab:conservativeness-reduction}, with the addition of that for $l=12.5$.}
    \label{fig:conservativeness-reduction-comparison}
\end{figure}
\begin{table}[!ht]
    \centering
    \small
    \setlength{\tabcolsep}{3.5pt}  
    \caption{\vspace{0.1cm}Area of $\mathcal{U}(\widetilde{\mathcal{R}})$ and $\mathcal{U}(\widetilde{\mathcal{R}}')$ compared with that covered by $\mathcal{H}$.}
    \label{tab:conservativeness-reduction}
    \begin{tabular}{ccrrrr}
        \toprule
        Env & $l$ & Area $\mathcal{U}(\tilde{\mathcal{R}})$ & Area $\mathcal{U}(\tilde{\mathcal{R}}')$ & Area $\mathcal{H}_{0.5}$ & Area $\mathcal{H}_{0.25}$ \\
        \midrule
        1 & 10.0 &   54.00 &    86.84 &  104.00 &  111.81 \\
        1 & 15.0 &  147.00 &   187.91 &  213.25 &  223.25 \\
        2 & 10.0 &  100.00 &   126.29 &  139.50 &  149.38 \\
        2 & 15.0 &  252.00 &   337.09 &  364.25 &  387.62 \\
        3 & 10.0 &  155.00 &   198.46 &  227.75 &  250.44 \\
        3 & 15.0 &  639.62 &   829.02 &  831.75 &  911.12 \\
        4 & 10.0 &  286.25 &   377.79 &  418.50 &  478.81 \\
        4 & 15.0 & 1800.75 &  2400.00 & 2469.25 & 2865.88 \\
        5 & 10.0 &  435.00 &   544.69 &  380.50 &  444.19 \\
        5 & 15.0 & 3810.00 &  5049.00 & 2988.25 & 3542.81 \\
        6 & 10.0 &  548.00 &   719.37 &  571.25 &  655.81 \\
        6 & 15.0 & 7784.50 & 10062.00 & 5959.75 & 6960.94 \\
        \bottomrule
    \end{tabular}
\end{table}

\section{Timing of path planning queries}
\label{appendix:path-planning-timing}
Owing to the efficient nature of the simplicial complex model on which path planning is performed, path planning queries on $\widetilde{\mathcal{R}}$ can be solved significantly faster than on $\mathcal{H}$, especially for higher resolutions of the homotopy-augmented graph and for higher values of $l$. 
This is demonstrated empirically in Table \ref{tab:path-planning-timing}, where we report the numerical results corresponding to the solution time of multiple graph search queries performed on $\widetilde{\mathcal{R}}$, $\mathcal{H}_{0.5}$, and $\mathcal{H}_{0.25}$. Each row of the table corresponds to the time required to complete the path planning task in a different scenario, averaged over a set of 20 path planning queries between. In each query, the planning algorithm must search a path between $x_\mathrm{a}$ and a randomly selected point in $\mathcal{W}_\mathrm{free}$.

When running path planning queries on $\widetilde{\mathcal{N}}$ and on the entanglement-constrained version of $\mathcal{H}$, the difference in the path planning computation time is less pronounced, due to the lower number of nodes to search through. 
On average, searching over the entanglement-constrained homotopy-augmented graph with resolution 0.25 takes double the amount of time compared to on $\widetilde{\mathcal{N}}$, while for resolution 0.5 it takes approximately the same amount of time as on $\widetilde{\mathcal{N}}$.
\begin{table}[t]
    \centering
    \footnotesize
    \setlength{\tabcolsep}{3.5pt}  
    \caption{Timing of path planning queries on $\widetilde{\mathcal{R}}$, $\mathcal{H}_{0.5}$, and $\mathcal{H}_{0.25}$. Each row corresponds to the average computation time over 20 path planning queries.}
    \label{tab:path-planning-timing}
    \begin{tabular}{ccccc}
        \toprule
        Env & $l$ & $t(\widetilde{\mathcal{R}})$ [s] & $t(\mathcal{H}_{0.5})$ [s] & $t(\mathcal{H}_{0.25})$ [s] \\ 
        \midrule
        5 & 10.0 & 0.0008 $\pm$ 0.0002 & 0.0007 $\pm$ 0.0002 & 0.0032 $\pm$ 0.0011 \\
        5 & 12.5 & 0.0015 $\pm$ 0.0004 & 0.0030 $\pm$ 0.0010 & 0.0105 $\pm$ 0.0021 \\
        5 & 15.0 & 0.0022 $\pm$ 0.0003 & 0.0076 $\pm$ 0.0021 & 0.0378 $\pm$ 0.0077 \\
        6 & 10.0 & 0.0010 $\pm$ 0.0002 & 0.0014 $\pm$ 0.0005 & 0.0062 $\pm$ 0.0027\\
        6 & 12.5 & 0.0017 $\pm$ 0.0002 & 0.0047 $\pm$ 0.0011 & 0.0210 $\pm$ 0.0030\\
        6 & 15.0 & 0.0041 $\pm$ 0.0004 & 0.0220 $\pm$ 0.0040 & 0.1004 $\pm$ 0.0129\\
        \bottomrule
    \end{tabular}
\end{table}
\vspace{1cm}
\vfill

\end{document}